\DeclarePairedDelimiter\ceil{\lceil}{\rceil}
\DeclarePairedDelimiter\floor{\lfloor}{\rfloor}
\title{Bayesian Optimization with Exponential Convergence}
\author{
Kenji Kawaguchi \\
MIT  \\
Cambridge, MA, 02139\\
\texttt{kawaguch@mit.edu} \\
\And
Leslie Pack Kaelbling \\
MIT \\
Cambridge, MA, 02139 \\
\texttt{lpk@csail.mit.edu} \\
\And
Tom\'{a}s Lozano-P\'{e}rez \\
MIT \\
Cambridge, MA, 02139 \\
\texttt{tlp@csail.mit.edu} \\
}
\theoremstyle{definition}
\newtheorem{theorem}{Theorem}
\newtheorem{lemma}{Lemma}
\newtheorem{corollary}{Corollary}
\newtheorem{defn}{Definition}
\newtheorem{remark}{Remark}
\newtheorem{assum}{Assumption}
\begin{document}

\maketitle

\begin{abstract}
This paper presents a Bayesian optimization method with exponential convergence  \textit{without} the need of  auxiliary optimization and \textit{without} the \(\delta\)-cover sampling. Most Bayesian optimization methods require  auxiliary optimization: an additional non-convex global optimization problem, which can be time-consuming and hard to implement in practice. \hspace{-3.5pt}Also, the existing Bayesian optimization method with exponential convergence \cite{de2012exponential} requires access to the \(\delta\)-cover sampling, which was considered to be impractical \cite{de2012exponential,wang2014bayesian}. Our approach eliminates both requirements and achieves an exponential convergence rate.

\end{abstract}

\section{Introduction}
\label{submission}
We consider a general global optimization problem: maximize $f( x) \: $ subject to $ x \in \Omega \subset \mathbb R^D$ where $f\!: \Omega   \to\mathbb{R}$  is a non-convex  black-box deterministic function. Such a problem arises in many real-world applications, such as parameter tuning in machine learning  \cite{snoek2012practical}, engineering design problems \cite{carter2001algorithms}, and model parameter fitting in biology \cite{zwolak2005globally}. For this problem, one performance measure  of an algorithm  is the \textit{simple regret}, \(r_n\), which is given by \(r_n = \sup_{ x \in \Omega} f( x) - f( x^+) \)
where \( x^+\) is the best input vector found by the algorithm. For brevity, we use the term ``regret" to mean simple regret.

The general global optimization problem is known to be intractable if we make no further assumptions \cite{dixon1977global}.  The simplest  additional assumption to restore tractability is to assume the existence of a bound on the  slope of \(f\). A well-known variant of this assumption  is Lipschitz continuity with a known Lipschitz constant, and many algorithms have been proposed in this setting \cite{shubert1972sequential, mayne1984outer,mladineo1986algorithm}. These algorithms successfully guaranteed certain bounds on the regret. However appealing from a theoretical point of view, a practical concern was soon raised regarding
the assumption that a tight Lipschitz constant is known. Some researchers
relaxed this somewhat strong assumption by proposing procedures to estimate a Lipschitz
constant during the optimization process \cite{strongin1973convergence,kvasov2003local,bubeck2011lipschitz}.

Bayesian optimization is an efficient way to relax this assumption of complete knowledge of the Lipschitz constant,  and has become a well-recognized method for solving global optimization
problems with non-convex black-box functions. In the machine learning community, Bayesian
optimization---especially by means of a Gaussian process (GP)---is an active research area \cite{gardner2014bayesian,wang2013bayesian,srinivas2010gaussian}.  With the requirement of the access to the \(\delta\)-cover sampling procedure (it samples  the function uniformly such that the density of samples doubles in  the feasible regions at each iteration), de  Freitas et al. \cite{de2012exponential}  recently proposed a theoretical procedure that maintains an exponential convergence rate (exponential regret). However, as pointed out by Wang et al. \cite{wang2014bayesian}, one  remaining problem is to derive a GP-based optimization method with an exponential convergence rate  \textit{without} the \(\delta\)-cover sampling procedure, which is computationally too demanding in many cases.

In this paper, we propose a novel GP-based global optimization algorithm, which maintains an exponential convergence rate and converges rapidly \textit{without} the \(\delta\)-cover sampling procedure.

\section{Gaussian Process Optimization}
In Gaussian process optimization, we estimate the distribution over function \(f\) and use this information to decide which point of \(f\) should be evaluated next. In a parametric approach, we consider a parameterized function \(f({x;} \: \theta)\),  with  \(\theta\) being distributed according to some prior. In contrast, the nonparametric GP approach directly puts the GP prior over \(f\) as $ f({\cdot}) \sim  GP (m({\cdot)}, {\mkern 1mu} {\kappa({\cdot,\cdot})}) $
where \(m({\cdot})\) is the mean function and \(\kappa({\cdot,\cdot})\) is the covariance function or the kernel. That is, \(m({x})=\mathbb{E}[f({x})]\) and \(\kappa({x,x'})=\mathbb{E}[(f({x})-m({x}))(f({x'})-m({x'}))^{T}]\). For a finite set of points, the GP model is simply a joint Gaussian: \(\mathbf{f(x}_{1:N}) \sim {\cal N}(\mathbf{m(x}_{1:N}),\mathbf{K})\), where \(\mathbf{K}_{i,j}=\kappa( x_i, x_j)\) and \(N\) is the number of data points. To predict the value of \(f\) at  a new data point, we first consider the joint distribution over \(f\) of  the old data points and the new data point:

\begin{align*}
\left( {\begin{array}{c}
\mathbf{f(x}_{1:N}) \\
f(x_{N+1}) \\
\end{array}} \right) \sim {\cal N}
\left( \begin{array}{c}
\mathbf{m(x}_{1:N}) \\
m(x_{N+1}) \\
\end{array}
,
\left[
\begin{array}{cc}
\mathbf{K} & \mathbf{k} \\
\mathbf{k}^{T} & \kappa(x_{N+1},x_{N+1}) \\
\end{array}
\right]
\right)
\end{align*}
where \(\mathbf{k}=\kappa(\mathbf x_{1:N},\mathbf x_{N+1})\in\mathbb{R}^{N\times1}\). Then,  after factorizing the joint distribution using the Schur complement for the joint Gaussian, we obtain the conditional distribution, conditioned on observed entities \({\cal D}_N := \{\mathbf x_{1:N},\mathbf{f(x}_{1:N}) \}\) and \(x_{N+1}\),  as:
\[
f(\mathbf x_{N+1})| {\cal D}_{N}, x_{N+1} \sim {\cal N}(\mu ( x_{N+1}|{\cal D}_{N}),\sigma^2( x_{N+1}|{\mathcal D_{N}}) )
\]
where \(\mu( x_{N+1}|{\cal D}_{N}) = m( x_{N+1})+\mathbf{k}^{T}\mathbf{K}^{-1}(\mathbf{f(x}_{1:N})-\mathbf{m(x}_{1:N})) \) and \(\sigma^2( x_{N+1}|{\cal D}_{N})=\kappa(\mathbf{x}_{N+1},\mathbf{x}_{N+1})-\mathbf{k}^{T}\mathbf{K}^{-1}\mathbf{k}\). One advantage of GP is that this closed-form solution simplifies both its analysis and  implementation.

To use a GP, we must specify the mean function and the covariance function. The mean function is usually set to  be zero. With this zero mean function, the conditional mean \(\mu( x_{N+1}|{\cal D}_{N})\) can  still be flexibly specified by the covariance function, as shown in the above equation for  \(\mu\).  For the covariance function, there are several common choices, including the Matern kernel and the Gaussian kernel. For example, the Gaussian kernel is defined as $ \kappa({x,x'})= \text{exp} \left( -\frac1{2}{(x-x')}^T{\Sigma^{-1}(x-x')} \right) $ where \( \Sigma^{-1}\) is the kernel parameter matrix. The kernel parameters or hyperparameters can be estimated by empirical Bayesian methods \cite{murphy2012machine}; see \cite{gpml06} for more information about GP.

The flexibility and simplicity of the GP prior make it a common choice for continuous objective functions in the Bayesian optimization literature. Bayesian optimization with GP selects the next query point that optimizes the acquisition function generated by GP. Commonly used acquisition functions include the upper confidence bound (UCB) and expected improvement (EI). For brevity, we consider Bayesian optimization with UCB, which works as follows. At each iteration, the UCB function \({\cal U}\) is maintained as $ {\cal U}( x|D_{N})=\mu( x|{\cal D}_{N})+ \varsigma \sigma( x|{\cal D}_{N}) $
where $\varsigma \in \mathbb R$ is a parameter of the algorithm. To find the next query \( x_{n+1}\) for the objective function \(f\), GP-UCB solves an additional non-convex optimization problem with \({\cal U}\) as \( x_{N+1}=\arg\max_{ x} {\cal U}( x|D_N)\). This is often carried out by other global optimization methods such as DIRECT and CMA-ES. The justification for introducing a new optimization problem lies in the assumption that the cost of evaluating the objective function \(f\) dominates  that of solving additional optimization problem.

For deterministic function, de Freitas et al. \cite{de2012exponential} recently presented a theoretical procedure that maintains exponential convergence rate. However, their  own paper and the follow-up research \cite{de2012exponential,wang2014bayesian} point out that this result relies  on an impractical sampling procedure,  the \(\delta\)-cover sampling. To overcome this issue, Wang et al. \cite{wang2014bayesian} combined GP-UCB with a hierarchical partitioning optimization method, the SOO algorithm \cite{munos2011optimistic}, providing a regret bound with polynomial dependence on the number of function evaluations. They concluded that creating a  GP-based algorithm with an \textit{exponential} convergence rate \textit{without} the impractical sampling procedure remained an open problem.

\section{Infinite-Metric GP Optimization}
\subsection{Overview}
The GP-UCB algorithm can be seen as a member of the class of  bound-based search methods, which includes Lipschitz optimization, A* search, and PAC-MDP algorithms with optimism in the face of uncertainty. Bound-based search methods have a common property: the tightness of the bound
determines its effectiveness. The tighter the bound is, the better the performance becomes. However, it is often difficult to obtain a tight bound while maintaining correctness. For example, in A* search, admissible heuristics maintain the correctness of the bound, but  the estimated bound with admissibility is often too loose in practice, resulting in a  long period of global search.

The GP-UCB algorithm
has the same problem. The  bound in GP-UCB is represented by UCB, which has the following property: \(f( x) \leq {\cal U} ( x|{\cal D})\) with some probability. We formalize this property in the analysis of our algorithm. The   problem is essentially due to the difficulty of obtaining a tight bound \({\cal U} ( x|{\cal D})\) such that  \(f( x) \leq {\cal U} ( x|{\cal D})\) and \(f( x) \approx {\cal U} ( x|{\cal D})\) (with some probability). Our solution strategy is to first admit that the bound encoded in GP prior may not be  tight enough to be useful by itself. Instead of  relying on a single bound given by the GP, we leverage the existence of an \textit{unknown}  bound encoded in the continuity  at a global optimizer.
\begin{assum}
(Unknown Bound) There exists a global optimizer \( x^ *  \) and  an \textit{unknown} semi-metric \(\ell\) such that for all \( x \in \Omega\),  $ f({ x^ * })  \le  f( x) + \ell \left( { x,{ x^ * }} \right)$ and $\ell \left( { x,{ x^ * }} \right) < \infty$.
\end{assum}

In other words, we do not expect the \textit{known} upper  bound due to GP to be tight, but instead expect  that there exists some \textit{unknown}  bound that might be tighter. Notice that in the case where the bound by GP is as tight as the unknown bound by semi-metric \(\ell\) in Assumption 1, our method still maintains an exponential convergence rate and an advantage over GP-UCB (no need for auxiliary optimization). Our method is expected to become  relatively much better when the \textit{known}   bound due to GP is less tight compared to the unknown bound by \(\ell\).

As the semi-metric \(\ell\) is unknown, there
are infinitely many possible candidates that we can think of for \(\ell\). Accordingly, we simultaneously conduct  global and local searches based on all the candidates of the bounds. The bound estimated by GP is used to reduce the number of  candidates. Since the bound estimated by GP is known, we can ignore the candidates of  the bounds that are looser than  the bound estimated by GP. The source code of the proposed algorithm is publicly available at \url{http://lis.csail.mit.edu/code/imgpo.html}.

\subsection{Description of Algorithm}
Figure 1 illustrates how the algorithm works with a simple 1-dimensional objective function. We employ hierarchical partitioning  to maintain hyperintervals, as illustrated by the line segments in the figure. We consider a hyperrectangle as our hyperinterval, with its center being the evaluation point of \(f\) (blue points in each line segment in Figure 1). For each iteration $t$, the algorithm performs the following procedure \textit{for each interval size}:
\begin{enumerate}[label=(\roman*)]
\item Select the interval with the maximum center value  among the intervals of the same size.
\item Keep the  interval selected by (i)  if it has a center value greater than that of any \textit{larger}  interval.
\item Keep the interval  accepted by (ii) if it contains a UCB greater than the center value of any \textit{smaller} interval.
\item If an interval is accepted by  (iii), divide it along  with the longest coordinate into three new intervals.
\item  For each new interval,  if the UCB of the evaluation point is less than the best  function value found so far, skip the evaluation and use the UCB value as the center value until the interval is accepted in step (ii) on some future iteration;  otherwise, evaluate the center value.
\item Repeat steps (i)--(v) until every size of intervals are considered
\end{enumerate}
Then, at the end of each iteration, the algorithm   updates the GP hyperparameters. Here, the purpose of steps (i)--(iii) is to select an interval that might contain the global optimizer. Steps (i) and (ii) select the possible intervals based on the unknown bound by \(\ell\), while Step (iii) does so based on the bound by GP.

\begin{figure*}[t!]
   \includegraphics[width=\linewidth]{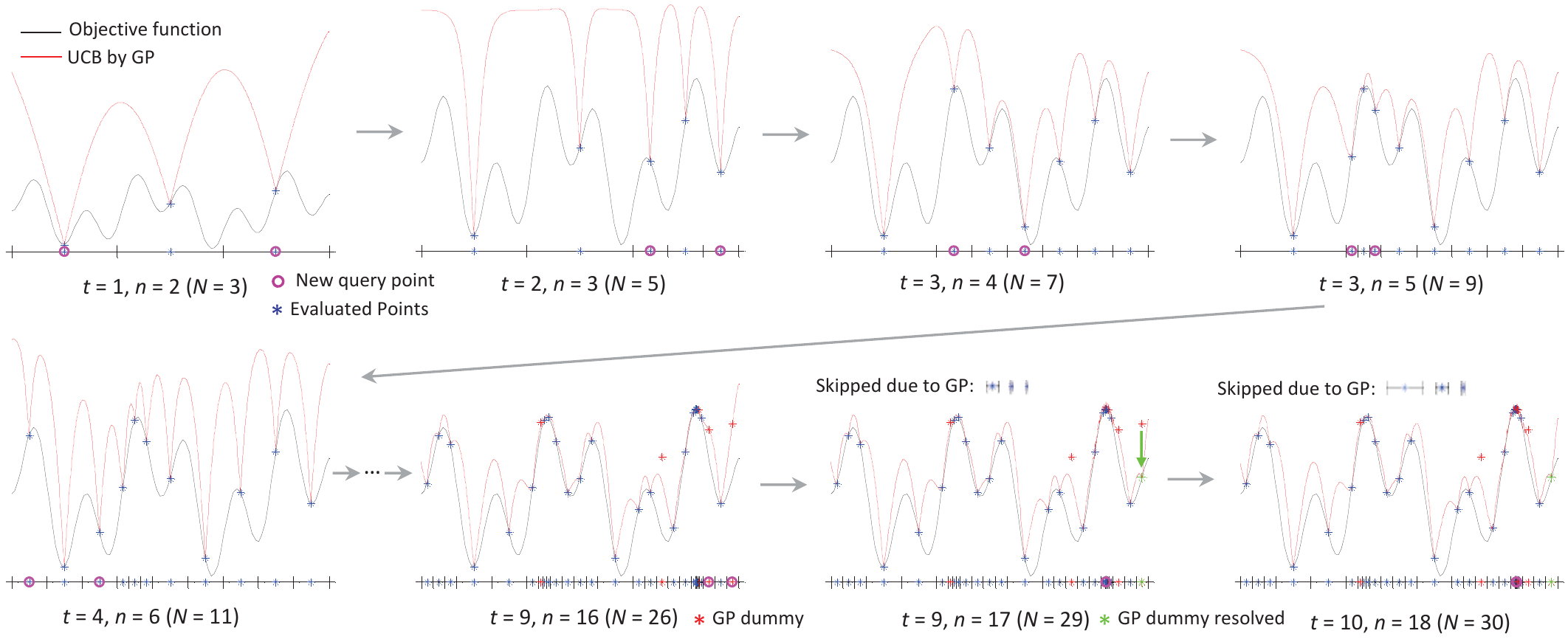}
   \caption{An illustration of IMGPO: \(t\) is the number of iteration, \(n\) is the number of divisions (or splits), \(N\) is the number of function evaluations.}
\end{figure*}

We now explain the procedure using the example in Figure 1. Let \(n\) be the number of divisions of intervals and let \(N\) be the number of function evaluations. \(t\) is the number of iterations. Initially, there is only one interval (the center of the input region \(\Omega \subset \mathbb R\)) and thus this interval is divided, resulting in the first diagram of Figure 1. At the beginning of iteration $t=2$ ,  step  (i) selects the third  interval  from the left side in the first diagram ($t = 1, n=2)$, as its center value is the maximum. Because there are no intervals of different size at this point, steps (ii) and (iii) are skipped. Step (iv) divides the third interval, and then the GP hyperparameters are updated, resulting in the second diagram ($t = 2, n=3$). At the beginning of iteration $t=3$, it starts conducting\ steps (i)--(v) for the largest intervals. Step (i) selects   the second interval from the left side and step (ii) is skipped. Step (iii) accepts the second interval, because the UCB  within this interval is no less than the center value of the smaller intervals, resulting in the third diagram ($t = 3, n=4$). Iteration $t=3$ continues by conducting steps (i)--(v) for the smaller intervals. Step (i) selects the second interval from the left side, step (ii) accepts it, and step (iii) is skipped, resulting in the forth diagram ($t = 3, n=4$).
The effect of the step (v)  can be seen in the diagrams for iteration $t=9$. At \(n=16\), the far right interval is divided, but no function evaluation occurs. Instead, UCB values given by GP are placed in the new intervals indicated by the red asterisks. One of the temporary dummy values is resolved at \(n=17\) when the interval is queried for division, as shown by the green asterisk. The effect of step (iii) for the rejection case is illustrated in the last diagram for iteration \(t=10\). At \(n=18\), \(t\) is increased to 10 from 9, meaning that  the largest intervals are first considered for division. However, the three largest intervals are all rejected in step (iii), resulting in the division of a very small interval near the global optimum at \(n=18\).

\subsection{Technical Detail of Algorithm}
\begin{algorithm} [t!]
\caption{Infinite-Metric GP Optimization (IMGPO)}
\label{algorithm1}
\begin{algorithmic}[1]
     \fontsize{9.500007pt}{10pt}\selectfont    
    \REQUIRE an objective function $f$, the search domain $\Omega$, the GP kernel $\kappa$, $\Xi_{max} \in \mathbb{N}^+$ and \(\eta \in(0,1)\)
    \vspace{+4pt}
    \STATE Initialize\index{normal} the set $\mathcal{T}_h=\{\emptyset\}\  \forall h\ge0$
    \STATE Set $c_{0,0}$ to be the center point of \(\Omega\) and  \(\mathcal{T}_0\leftarrow\{c_{0,0}\}\)
    \STATE Evaluate \(f\) at \(c_{0,0}\):  $g(c_{0,0}) \leftarrow f(c_{0,0}) $
    \STATE  $f^+\leftarrow g(c_{0,0}),{\cal D} \leftarrow\ \{(c_{0,0},g(c_{0,0}))\}$
    \STATE  $n, N\leftarrow 1, N_{gp} \leftarrow 0,  \Xi\leftarrow 1$
    \FOR {$t = 1,2,3,...$}
       \STATE  $\upsilon_{max}\leftarrow-\infty$
       \FOR[\textcolor{blue}{\textbf{for-loop for steps (i)-(ii)}}]{$h=0$ to $depth(\mathcal{T})$}
          \WHILE{true}
            \STATE{$i_h^* \leftarrow\ \arg \max_{i: c_{h,i} \in \mathcal{T}_h } g(c_{h,i})$}
            \IF{$g(c_{h,i_h^*}) < \upsilon_{max}$}
              \STATE $i_h^* \leftarrow\ \emptyset$, \textbf{break}
            \ELSIF{ $g(c_{h,i_h^*}) $ is \textit{not} labeled as \textit{GP-based}}
              \STATE $\upsilon_{max}\leftarrow g(c_{h,i_h^*})$, \textbf{break}
            \ELSE
              \STATE{$g(c_{h,i_h^*}) \leftarrow f(c_{h,i_h^*})$ and remove  the \textit{GP-based}  label from \(g(c_{h,i_h^*})\)}               \STATE{$N\leftarrow N+1, \; N_{gp}\leftarrow N_{gp}-1$}
              \STATE  ${\cal D} \leftarrow\ \{{\cal D},(c_{h,i_h^*},g(c_{h,i_h^*})) \}$
            \ENDIF
          \ENDWHILE
       \ENDFOR
       \FOR[\textcolor{blue}{\textbf{for-loop for step (iii)}}]{$h=0$ to $depth(\mathcal{T})$}
         \IF{$i_h^* \neq \emptyset$}
           \STATE \small$\xi \leftarrow\ $ the smallest positive integer s.t. \(i_{h+\xi}^* \neq \emptyset\) and \(\xi\le \min(\Xi,\Xi_{max})\) if exists, and 0 otherwise

           \STATE  $z(h,i^*_h)=\max_{k:c_{h+\xi,k} \in \mathcal{T}'_{h+\xi}(c_{h,i^*_{h}})}{\cal U}(c_{h+\xi, k}|{\cal D})$
           \IF{$\xi \neq 0$ and $z(h,i^*_h) < g(c_{h+\xi, i^*_{h+\xi}})$}
             \STATE $i_h^* \leftarrow\ \emptyset$, \textbf{break}
           \ENDIF
         \ENDIF
       \ENDFOR
       \STATE  $\upsilon_{max}\leftarrow-\infty$
       \FOR[\textcolor{blue}{\textbf{for-loop for steps (iv)-(v)}}]{$h=0$ to $depth(\mathcal{T})$}
         \IF{$i_h^* \neq \emptyset$ and $g(c_{h,i_h^*}) \ge \upsilon_{max}$}
         \STATE{$ \; n \leftarrow n+1$}.
             \STATE{\small Divide the hyperrectangle centered at $c_{h,i_h^*}$ along with the longest coordinate into three new hyperrectangles with the following centers: \\   \ \ \    $\mathcal{S} =\{ c_{h+1,i(left)},c_{h+1,i(center)},c_{h+1,i(right)} \} $   }
             \vspace{3pt}
             \STATE{ $\mathcal{T}_{h+1} \leftarrow \{\mathcal{T}_{h+1},\mathcal{S} \}$}
             \STATE $\mathcal{T}_{h} \leftarrow \mathcal{T}_{h} \setminus c_{h,i^*_h}$, $g(c_{h+1,i(center)}) \leftarrow g(c_{h,i^*_h})$
             \FOR{$i_{new}=\{i( left), i(right)\}$}
               \IF{$ {\cal U}(c_{h+1,i_{new}}|{\cal D}) \ge f^+$}
                 \STATE $g(c_{h+1,i_{new}}) \leftarrow f(c_{h+1,i_{new}})$
                 \STATE{ ${\cal D} \leftarrow\ \{{\cal D},(c_{h+1,i_{new}},g(c_{h+1,i_{new}})) \}  $} \\ $N\leftarrow N+1, f^+ \leftarrow \max (f^{+},g(c_{h+1,i_{new}}))$, $\upsilon_{max} = \max(\upsilon_{max},g(c_{h+1,i_{new}}))$
               \ELSE
                 \STATE{$g(c_{h+1,i_{new}}) \leftarrow {\cal U}(c_{h+1,i_{new}}|{\cal D})$ and  label \(g(c_{h+1,i_{new}})\) as \textit{GP-based}. \\ $N_{gp}\leftarrow N_{gp}+1$}
               \ENDIF
             \ENDFOR

         \ENDIF
       \ENDFOR
       \STATE{\small  Update $\Xi$: if $f^+ $ was updated, $\Xi \leftarrow \Xi + 2^{2}$ , and otherwise, $\Xi \leftarrow \max(\Xi - 2^{-1},1)$   }
       \STATE{\small Update GP hyperparameters by an empirical Bayesian method}
    \ENDFOR
\end{algorithmic}
\end{algorithm}

We define \(h\) to be the depth of the hierarchical partitioning tree,  and \(  c_{h,i} \) to be the center  point of  the $i^{th}$ hyperrectangle at depth \(h\). \(N_{gp}\) is the number of the GP evaluations. Define \(depth(\mathcal{T})\) to be the largest  integer \(h\) such that the set \(\mathcal{T}_h \) is not empty. To compute UCB \(\mathcal{U}\), we use \( \varsigma_M=\sqrt{2 \log(\pi^2M^2/12\eta)}\) where \(M\) is the number of the calls made so far for \({\cal U}\) (i.e., each time we use \({\cal U}\), we increment \(M\) by one). This particular form of \(\varsigma_M\) is to maintain the property of  \(f( x) \leq {\cal U} ( x|{\cal D})\) during an execution of our algorithm with probability at least \(1-\eta\). Here, \(\eta\) is the parameter of IMGPO. \(\Xi_{max}\) is another parameter, but it is only used to limit the possibly long computation of step (iii) (in the worst case, step (iii) computes UCBs $3^{\Xi_{max}}$ times although it would rarely happen).

The pseudocode is shown in Algorithm 1. Lines 8 to 23 correspond to steps (i)-(iii).
These lines compute the index \(i_h^*\) of the candidate of the rectangle that may contain a global optimizer for each depth \(h\).
For each depth \(h\), non-null index \(i_{h}^*\) at Line 24 indicates the remaining candidate of a rectangle that we want to divide. Lines 24 to 33 correspond to steps (iv)-(v) where the remaining candidates of the rectangles for all \(h\) are divided. To provide a simple executable division  scheme (line 29), we assume  \(\Omega\) to be a hyperrectangle (see the last paragraph of section
4 for a general case).

Lines 8 to 17  correspond to steps (i)-(ii).  Specifically, line 10 implements step (i) where a single candidate  is selected for each depth, and lines 11 to 12 conduct step (ii) where some candidates are screened out. Lines 13 to 17 resolve the the temporary dummy values computed by GP.
Lines 18 to 23 correspond to step (iii) where the candidates are further screened out. At line 21, $ \mathcal{T}'_{h+\xi}(c_{h,i^*_{h}})$ indicates the set of \textit{all} center points of a fully expanded tree until depth \(h+\xi\) \textit{within} the region covered by the hyperrectangle centered at \(c_{h,i^{*}_h}\). In other words, \(\mathcal{T}'_{h+\xi}(c_{h,i^*_{h}})\) contains the nodes of the fully expanded tree  rooted at \(c_{h,i^{*}_h}\)  with depth \(\xi\) and can be computed by dividing the current rectangle at \(c_{h,i^*_h}\) and recursively divide all the resulting new rectangles until depth \(\xi\) (i.e., depth \(\xi\) from \(c_{h,i^*_h}\), which is depth \(h+\xi\) in the whole tree).

\subsection{Relationship to Previous Algorithms}

The most closely related algorithm is the BaMSOO algorithm \cite{wang2014bayesian}, which combines  SOO with GP-UCB. However, it only achieves a polynomial regret bound while IMGPO achieves a exponential regret bound. IMGPO can achieve exponential regret because it utilizes  the information encoded in the GP prior/posterior to reduce the degree of the unknownness of the semi-metric \(\ell\).

The idea of considering a set of infinitely many bounds   was first proposed by Jones et al. \cite{jones1993lipschitzian}. Their  DIRECT algorithm has been successfully applied to real-world problems \cite{carter2001algorithms, zwolak2005globally}, but it only maintains the consistency property  (i.e., convergence in the limit) from a theoretical viewpoint. DIRECT takes an input parameter \(\epsilon\) to balance the global and local search efforts. This idea was generalized to the case of an unknown semi-metric and strengthened with a  theoretical support (finite regret bound) by Munos \cite{munos2011optimistic} in the SOO algorithm. By limiting the depth of the search tree with a  parameter  \(h_{max}\), the SOO algorithm  achieves a finite regret bound that depends on \textit{the near-optimality dimension}.

\section{Analysis}
In this section, we prove an exponential convergence rate of IMGPO and  theoretically discuss the reason why the novel idea underling IMGPO is beneficial. The proofs are provided in the supplementary material. To examine the effect of considering infinitely many possible candidates of the bounds, we introduce the following term.

\begin{defn}
(Infinite-metric \hspace{-2pt} exploration loss). \hspace{-4pt} The infinite-metric exploration loss  \( \rho_t\) is the number of intervals to be divided during iteration \(t\).
\end{defn}

The infinite-metric exploration loss \(\rho_\tau\) can be computed as $\rho_t = \sum_{h=1}^{depth(\mathcal{T})} \mathbbm{1}(i_{h}^* \neq \emptyset )$ at line 25. It is the cost  (in terms of the number of function evaluations) incurred by not committing to any particular upper bound. If we were to rely on a specific bound, \(\rho_\tau\) would be minimized to 1. For example, the DOO algorithm \cite{munos2011optimistic} has \(\rho_t = 1 \; \forall t \ge1  \).   \textit{Even if we know a particular upper  bound}, relying on this knowledge and thus minimizing \(\rho_\tau\)  is not a good option  \textit{unless the known bound is tight enough compared to the unknown  bound leveraged in our algorithm}. This will be clarified in our analysis. Let \(\bar \rho _{t}\) be the maximum of the averages of \(\rho_{1:t'}\) for \(t'=1,2,...,t\) (i.e., \(\bar \rho_t \equiv \max(\{\frac{1}{t'} \sum_{\tau=1}^{t'}\rho_{\tau} \:; \;t'=1,2, \dots,t \} )\).

\begin{assum}
For some pair of a global optimizer \( x^{*}\) and  an \textit{unknown} semi-metric \(\ell\) that satisfies Assumption 1, both of the following,  (i) shape on \(\ell\) and (ii) lower bound constant,  conditions hold:
\begin{enumerate}[label=(\roman*)]
\item there exist \(L>0\), \(\alpha>0\) and \(p\ge1\) in \(\mathbb{R}\) such that for all \(x,x'\in \Omega\), \(\ell( x',  x)\le L|| x'- x||^\alpha_p\).
\item there exists \(\theta \in (0,1)\) such that for all \(x\in \Omega\), \(f({ x^ * })  \ge  f( x) + \theta \ell \left(x,x^{*}\right)\).
\end{enumerate}
\end{assum}

In Theorem 1, we show that the exponential convergence rate  \(  O \left( \lambda^{N+N_{gp}}  \right) \) with \(\lambda<1\) is achieved.
We define  \(\Xi_{n} \le \Xi_{max} \) to be the largest \(\xi\) used so far with \(n\) total node expansions. For simplicity, we assume that \(\Omega\) is a square, which we satisfied in our experiments by scaling original \(\Omega\).

\begin{theorem}
Assume Assumptions 1 and 2. Let  \(\beta=\sup_{ x,  x'\in \Omega} \frac{1}{2}\| x -  x'\|_\infty \). Let \(\lambda=3^{-\frac{\alpha}{2CD\bar \rho_t}}<1\). Then, with probability at least \(1-\eta\), the regret of IMGPO  is bounded as

\begin{displaymath}
r_N \le L(3\beta D^{1/p})^\alpha \exp \left( - \alpha \left[ \frac{N+N_{gp}}{2CD\bar \rho_{t}}  -\Xi_{n}-2 \right] \ln 3 \right)=O \left( \lambda^{N+N_{gp}}  \right).
\end{displaymath}
\end{theorem}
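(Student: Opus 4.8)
The plan is to work on the event $E$ that every GP upper confidence bound produced during the run is valid, i.e. $f(x)\le \mathcal U(x\mid\mathcal D)$ for all queried points simultaneously, and on $E$ to run an SOO/DOO-style depth-advancement argument: the \emph{unknown} bound of Assumption 1 will certify that the candidate-selection steps (i)--(ii) never discard the cell containing $x^*$, the validity of $\mathcal U$ will certify the same for the pruning step (iii), and Assumption 2 will pin down how fast the expanded frontier descends the optimal path. A geometric-shrinkage estimate then converts the attained depth into the stated regret bound.

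First I would establish $\Pr[E]\ge 1-\eta$. Since $f(x)\mid\mathcal D$ is Gaussian with mean $\mu$ and variance $\sigma^2$, the one-sided Gaussian tail bound gives $\Pr[f(x)>\mathcal U(x\mid\mathcal D)]\le \exp(-\varsigma_M^2/2)$ for the $M$-th call to $\mathcal U$; the prescribed $\varsigma_M=\sqrt{2\log(\pi^2M^2/12\eta)}$ makes this at most $12\eta/(\pi^2M^2)$, and a union bound over all calls using $\sum_{M\ge1}M^{-2}=\pi^2/6$ yields total failure probability at most $\eta$. This is the only place randomness enters, so the remainder is deterministic given $E$. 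Next I would record the geometry: because $\Omega$ is a square and each division cuts the longest side into three, after $h$ levels each coordinate has been cut roughly $h/D$ times, so a depth-$h$ cell has $\ell_p$-radius at most $\beta D^{1/p}3^{-h/(CD)}$ for the appropriate constant $C$. Hence if $c_{h,i}$ lies on the optimal path, Assumption 1 and Assumption 2(i) give $f(c_{h,i})\ge f(x^*)-\ell(c_{h,i},x^*)\ge f(x^*)-L(\beta D^{1/p}3^{-h/(CD)})^\alpha$, so reaching depth $h^*$ on the optimal path already bounds the regret by $L(3\beta D^{1/p})^\alpha 3^{-\alpha h^*/(CD)}$ up to the constant in the statement.

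The heart of the proof, and the step I expect to be the main obstacle, is the depth-advancement lemma: on $E$, after $t$ iterations the deepest expanded cell on the optimal path has depth $h^*_t\ge t - CD(\Xi_n+2)$. I would argue correctness first --- using the value lower bound above, the optimal-path leader is competitive in steps (i)--(ii), and because $\mathcal U$ is valid and $\mathcal U(x^*\mid\mathcal D)\ge f(x^*)$ dominates the center values of all smaller cells, the $\xi$-level look-ahead in step (iii) never prunes the optimal cell, with the look-ahead budget contributing the $\Xi_n$ correction. For the rate, Assumption 2(ii) forces $f(x)\le f(x^*)-\theta\ell(x,x^*)$, which together with 2(i) sandwiches value by distance and makes the number of cells whose value exceeds $f(c_{h,i})$ at each depth $O(1)$ (near-optimality dimension zero); this bounds by a constant, absorbed into $C$, the number of rounds the frontier can stall before advancing, so the optimal path descends at least one level per round.

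Finally I would close the books between rounds and evaluations. Every division adds at most two evaluated-or-GP-labeled centers and a later resolution of a GP-based value leaves $N+N_{gp}$ unchanged, so $N+N_{gp}\le 2n+O(1)$ for $n$ total divisions, giving $n\ge (N+N_{gp})/2-O(1)$; meanwhile $\bar\rho_t\ge n/t$ yields $t\ge n/\bar\rho_t\ge (N+N_{gp})/(2\bar\rho_t)-O(1)$. Substituting $h^*_t\ge t-CD(\Xi_n+2)$ into the geometric estimate then produces $r_N\le L(3\beta D^{1/p})^\alpha 3^{-\alpha[(N+N_{gp})/(2CD\bar\rho_t)-\Xi_n-2]}$, which is the claim. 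The delicate points are verifying that step (iii)'s look-ahead never prunes the optimal cell --- exactly where validity of $\mathcal U$ is needed and where $\Xi_n$ enters --- and the exact per-round accounting tying $\rho_t$, $\bar\rho_t$, $N$, and $N_{gp}$ together, since it is the interplay of Assumption 2(ii) (forcing near-optimality dimension zero) with the geometric cell shrinkage that upgrades the polynomial SOO/BaMSOO rate to the exponential one.
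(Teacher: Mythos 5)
Your overall architecture matches the paper's: a validity event for all UCB calls (the paper's Lemma 1), a depth-advancement argument along the optimal path in which Assumption 2(ii) forces near-optimality dimension zero via an $\ell$-ball packing, and a final conversion between iterations $t$, expansions $n$, and $N+N_{gp}$ using $\bar\rho_t$ and the factor of two per division. The bookkeeping at the end ($N+N_{gp}=2n+O(1)$, $t\ge n/\bar\rho_t$) is right. However, there are two genuine gaps in the central step. First, your claim that steps (i)--(iii) \emph{never} discard the cell containing $x^*$ is false, and the weaker claim you need is exactly the crux of the paper's Lemma 3: the optimal cell at depth $h+1$ \emph{is} routinely dominated --- by a same-depth cell in step (i), a larger cell in step (ii), or a smaller cell at depth $h+1+\xi$ in step (iii) --- and the correct argument is that any such dominating cell $c$ must satisfy $f(c)\ge f(x^*)-\delta(\cdot)$ at its own level (for step (iii) this uses Lemma 1 together with the fact that the optimal cell's depth-$\xi$ subtree contains a center within $\delta(h+1+\xi)$ of $x^*$), hence is itself $\delta$-optimal; Assumption 2(ii) then caps the number of such cells per level at a constant $C=\lceil(\theta\nu)^{-D}\rceil$, and since each dominator is consumed when expanded, the total number of expansions before the optimal path reaches depth $h$ is at most $\sum_{\tau=1}^{C(h+\Xi_n+2)}\rho_\tau\le C(h+\Xi_n+2)\bar\rho_t$.

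Second, your quantitative accounting misplaces $C$: the cell radius at depth $h$ shrinks as $3^{-\lfloor h/D\rfloor}$ (so $\delta(h)=L(3\beta D^{1/p})^\alpha 3^{-\alpha h/D}$, with no $C$), whereas $C$ enters only as the per-level count of $\delta$-optimal competitors just described. Consequently the correct depth bound is $h^*\ge \frac{n}{C\bar\rho_t}-\Xi_n-2$, a \emph{multiplicative} degradation by $C\bar\rho_t$, not your $h^*_t\ge t-CD(\Xi_n+2)$ with an additive stall correction: the number of stalled rounds grows linearly with the attained depth (one constant's worth per level), so it cannot be absorbed into an additive $O(\Xi_n)$ term. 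Your final formula reproduces the theorem only because the spurious $C$ you inserted into the shrinkage rate $3^{-h/(CD)}$ exactly cancels the $C$ missing from your depth-advancement inequality; neither intermediate statement is correct on its own. (A minor point: the Gaussian tail bound needs the factor $\tfrac12$, i.e.\ $\Pr(f(x)>\mathcal U)\le\tfrac12 e^{-\varsigma_M^2/2}$, for the union bound $\tfrac12\sum_M 12\eta/(\pi^2M^2)$ to come out to $\eta$ rather than $2\eta$.)
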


Importantly, our bound holds for the best values of the unknown \(L,\alpha\) and \(p\) even though these values are not given. The closest result in  previous work is that of BaMSOO \cite{wang2014bayesian}, which obtained \(\tilde O(n^{-\frac{2\alpha}{D(4-\alpha)}})\) with probability \(1-\eta\) for \(\alpha=\{1,2\}\).
As can be seen, we have improved the regret bound. Additionally, in our analysis, we can see how \(L,\)  \(p,\) and \(\alpha\) affect the bound, allowing us to view the inherent difficulty of an objective function in a theoretical perspective. Here, \(C\) is a constant in \(N\) and is used in previous work  \cite{munos2011optimistic,wang2014bayesian}. For example, if we conduct  \(2^D\) or \(3^D-1\) function evaluations per node-expansion and if \(p=\infty\), we have that \(C=1\).

We note that \(\lambda\) can get  close to one as input dimension \(D\) increases, which suggests that there is a remaining challenge in scalability for higher dimensionality. One strategy for addressing  this problem would be to leverage  additional assumptions such as those in   \cite{wang2013bayesian,kandasamy2015high}.

\begin{remark}
(The effect of the tightness of UCB by GP) If UCB computed by GP is ``useful'' such that \(N/\bar \rho_t = \Omega(N)\), then our regret bound becomes \(  O \left( \exp\left(-\frac{N+N_{gp}}{2CD} \alpha \ln3 \right)  \right) \).
If the  bound due to UCB by GP is too loose (and thus useless),  \(\bar \rho_{t}\) can increase up to \(O(N/t)\) (due to \(\bar \rho_t \le \sum^t_{i=1}{i}/{t}\le O(N/t) \)), resulting in the regret bound of \(O \left( \exp\left(-\frac{t(1+N_{gp}/N)}{ 2CD}\alpha \ln3 \right)  \right)\), which can be bounded by \(  O \left( \exp\left(-\frac{N+N_{gp}}{ 2CD} \max(\frac{1}{\sqrt N}, \frac{t}{N})\alpha \ln3 \right)  \right) \)\footnote{\label{footenote}This can be done by limiting the depth of search tree as \(depth(T)=O(\sqrt N)\).  Our proof works with this additional mechanism, but results in the regret  bound with \(N\) being replaced by \(\sqrt N\). Thus, if we assume to have at least ``not useless'' UCBs such that \(N/\bar \rho_t = \Omega(\sqrt{N})\), this additional mechanism can be disadvantageous. Accordingly, we do not adopt it in our experiments.}. This is still better than the known results.
\end{remark}

\begin{remark}
(The effect of GP) Without the use of GP, our regret bound would be as follows: $ r_N \le L(3\beta D^{1/p})^\alpha \exp ( - \alpha [ \frac{N}{2CD} \frac{1}{\tilde \rho_{t}} -2 ] \ln 3 ) $, where \(\bar \rho_{t} \le \tilde \rho_{t}\) is the infinite-metric exploration loss without GP. Therefore, the use of GP reduces the regret bound by increasing \(N_{gp}\) and decreasing \(\bar \rho_{t}\), but may potentially increase the bound by increasing \(\Xi_{n} \le \Xi\).
\end{remark}

\begin{remark}
(The effect of infinite-metric optimization) To understand the effect of considering all the possible upper bounds, we consider the case without GP. If we consider all the possible bounds, we have the regret bound \footnotesize \(L(3\beta D^{1/p})^{\alpha} \exp ( - \alpha [ \frac{N}{2CD} \frac{1}{\tilde \rho_{t}} -2 ] \ln 3 )\) \normalsize for \textit{the best unknown} \(L, \: \alpha\) and \(p\). For standard optimization with a estimated bound, we have \footnotesize \(L'(3\beta D^{1/p'})^{\alpha'} \exp ( - {\alpha'}  [ \frac{N}{2C'D}  -2 ] \ln 3 )\) \normalsize for an estimated \(L',\alpha'\), and \(p'\). By algebraic manipulation, considering all the possible bounds has a better regret when $ \tilde \rho_{t}^{-1} \ge \frac{2CD}{N \ln3^{\alpha}} (( \frac{N}{2C'D}  -2 ) \ln 3^{\alpha'} + 2 \ln 3^\alpha - \ln  \frac{L'(3\beta D^{1/p'})^{\alpha'}}{L(3\beta D^{1/p})^\alpha}   ) $.
For an intuitive insight, we can simplify the above by assuming \(\alpha' = \alpha\) and \(C'=C\) as $
\tilde \rho_t^{-1} \ge 1- \frac{Cc_2D}{N} \ln \frac{L' D^{\alpha/p'}}{LD^{\alpha/p}}  $. Because \(L\) and \(p\) are the ones that achieve the lowest bound, the logarithm on the right-hand side is always non-negative. Hence, \(\tilde \rho_{t} = 1\) always satisfies the condition. When \(L'\) and \(p'\) are not tight enough, the logarithmic term increases in magnitude, allowing \(\tilde \rho_{t} \) to increase. For example, if the second term on the right-hand side has a magnitude of greater than 0.5, then \(\tilde \rho_{t} = 2\) satisfies the inequality. Therefore,  even if we know the   upper bound of the function, we can see that it may be better not to rely on this, but rather take the infinite many possibilities into account.
\end{remark}

\begin{figure*}[b!]
        \centering
        \begin{subfigure}[b]{0.325\linewidth}
                \includegraphics[trim = 20mm 25mm 25mm 40mm, width=\linewidth]{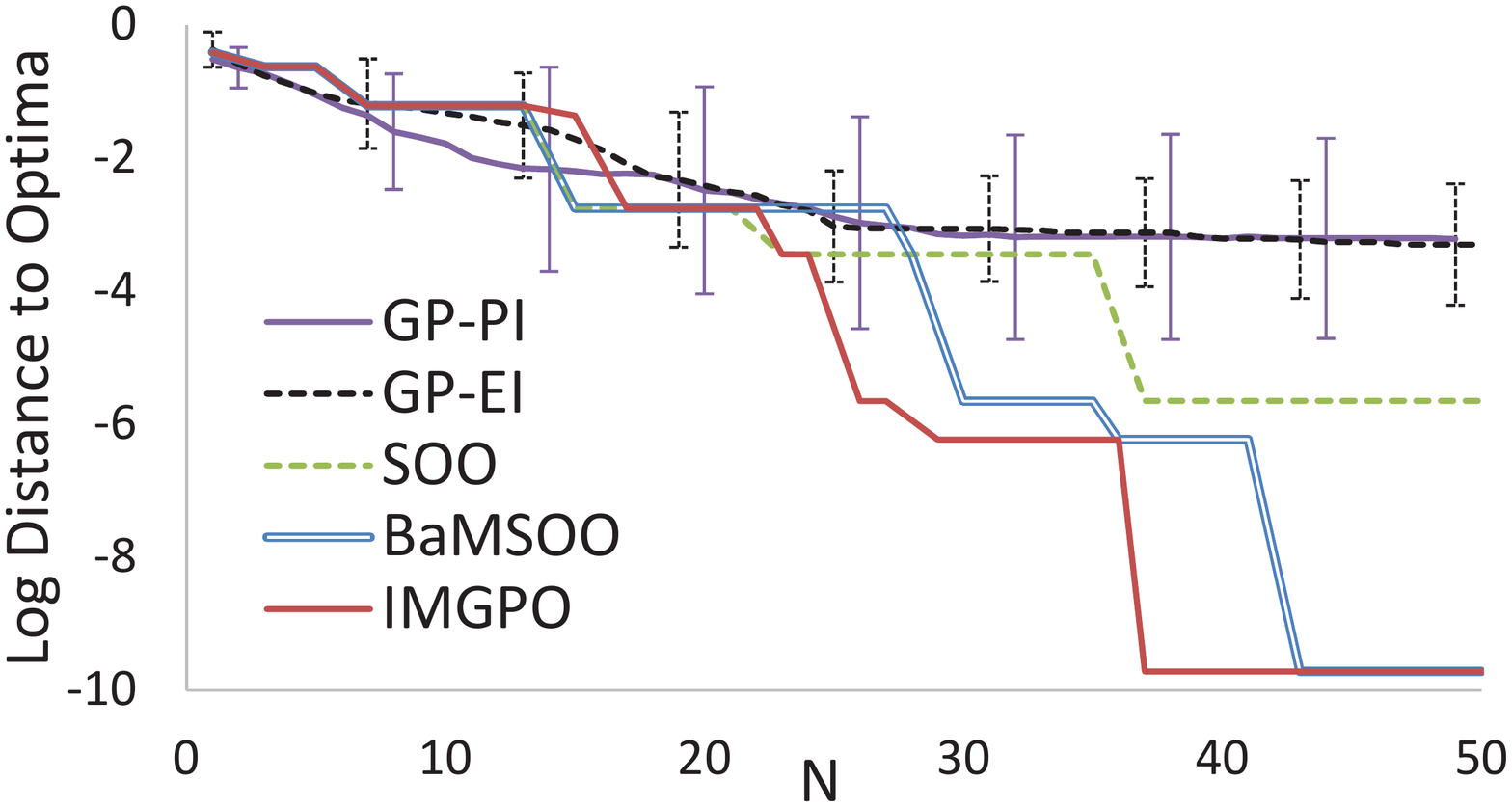}
                \vspace{-25pt}
                \caption{Sin1: [1, 1.92, 2]}
        \end{subfigure}
        \vspace{0pt}
        \begin{subfigure}[b]{0.325\linewidth}
                \includegraphics[trim = 20mm 25mm 24mm 40mm, width=\linewidth]{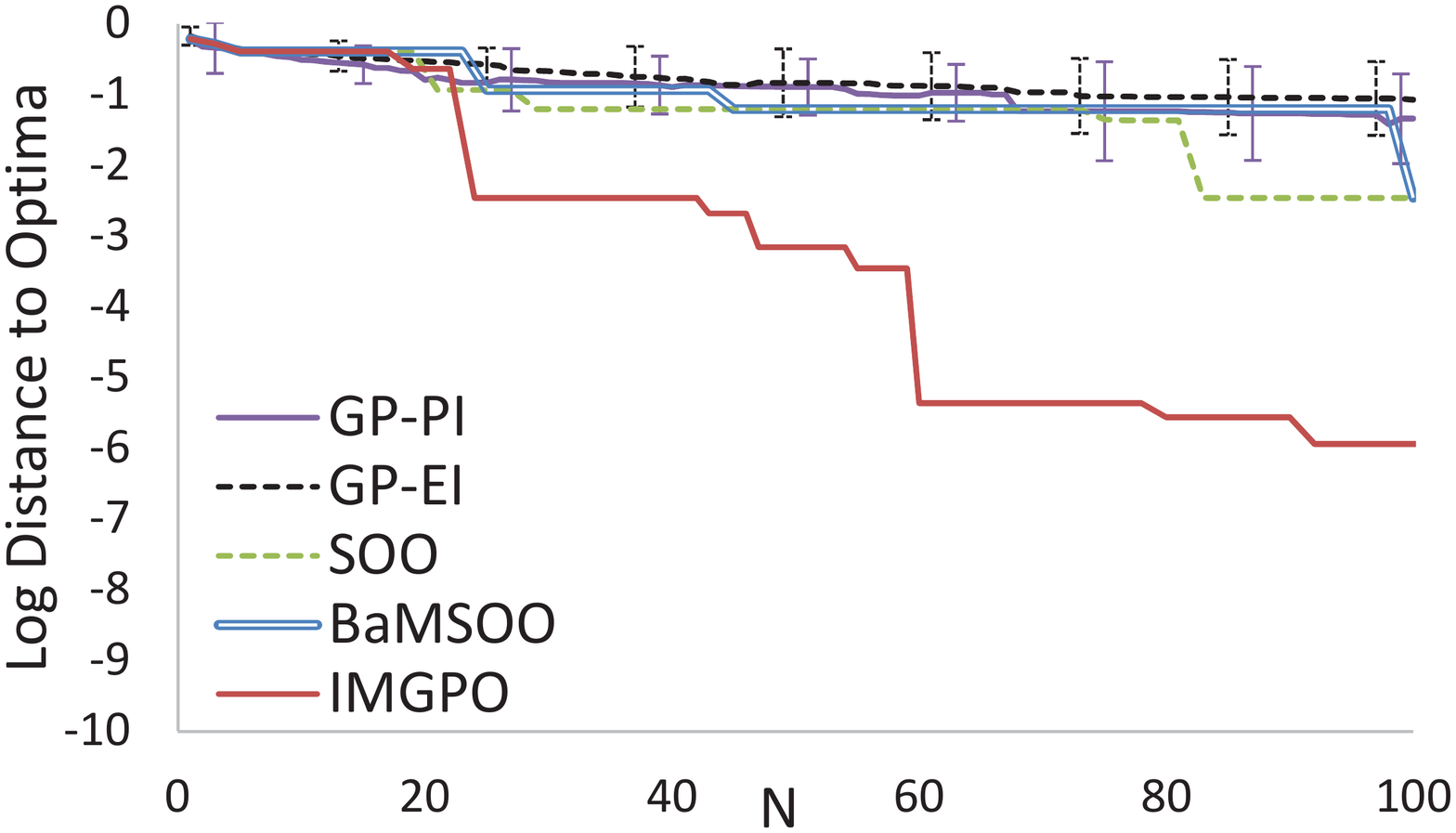}
                \vspace{-25pt}
                \caption{Sin2: [2, 3.37, 3]}
        \end{subfigure}
        \vspace{0pt}
        \begin{subfigure}[b]{0.325\linewidth}
                \includegraphics[trim = 20mm 25mm 23mm 40mm, width=\linewidth]{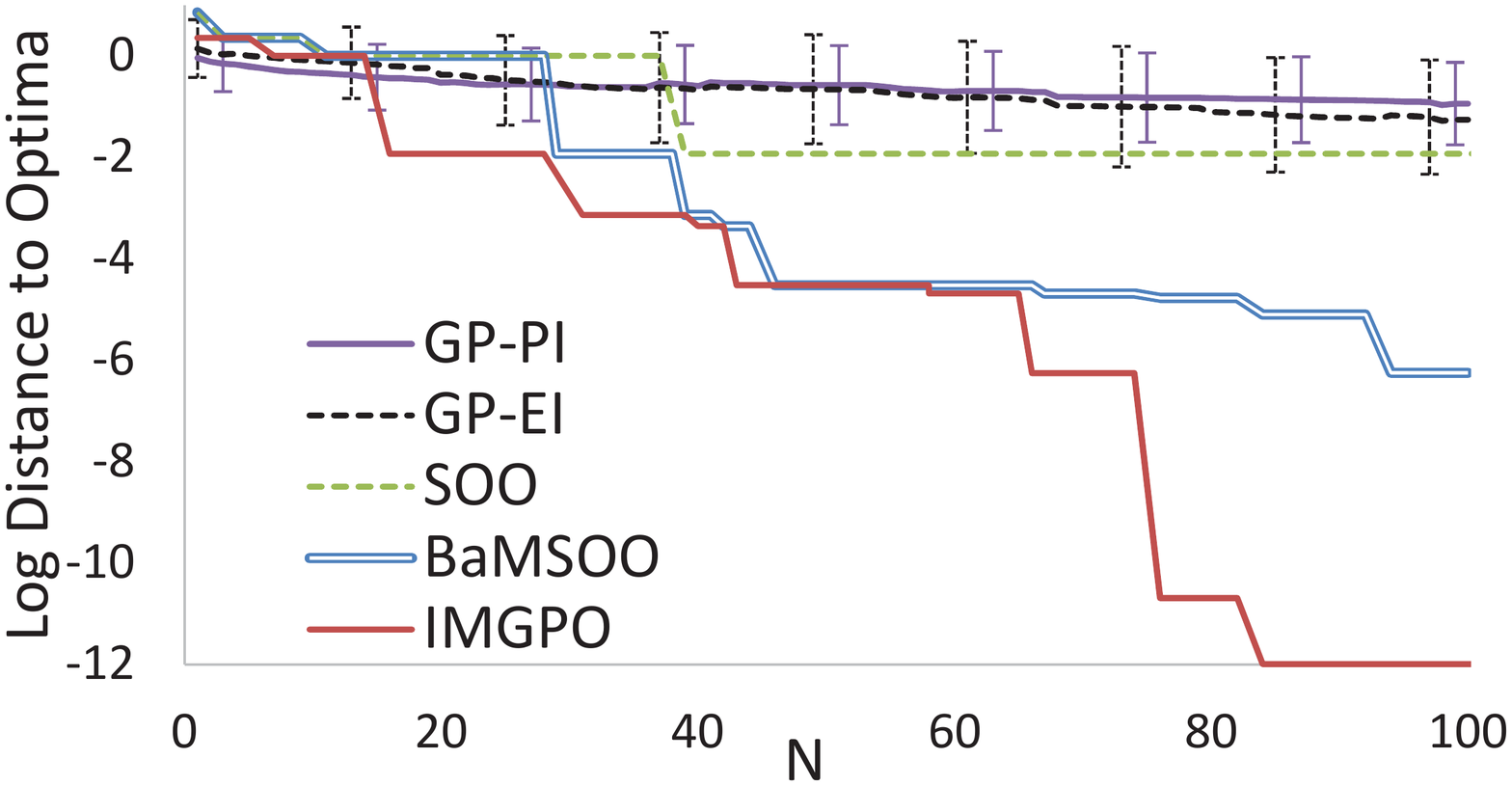}
                \vspace{-25pt}
                \caption{Peaks: [2, 3.14, 4]}
        \end{subfigure}
        \begin{subfigure}[b]{0.325\linewidth}
                \includegraphics[trim = 20mm 25mm 24mm 40mm, width=\linewidth]{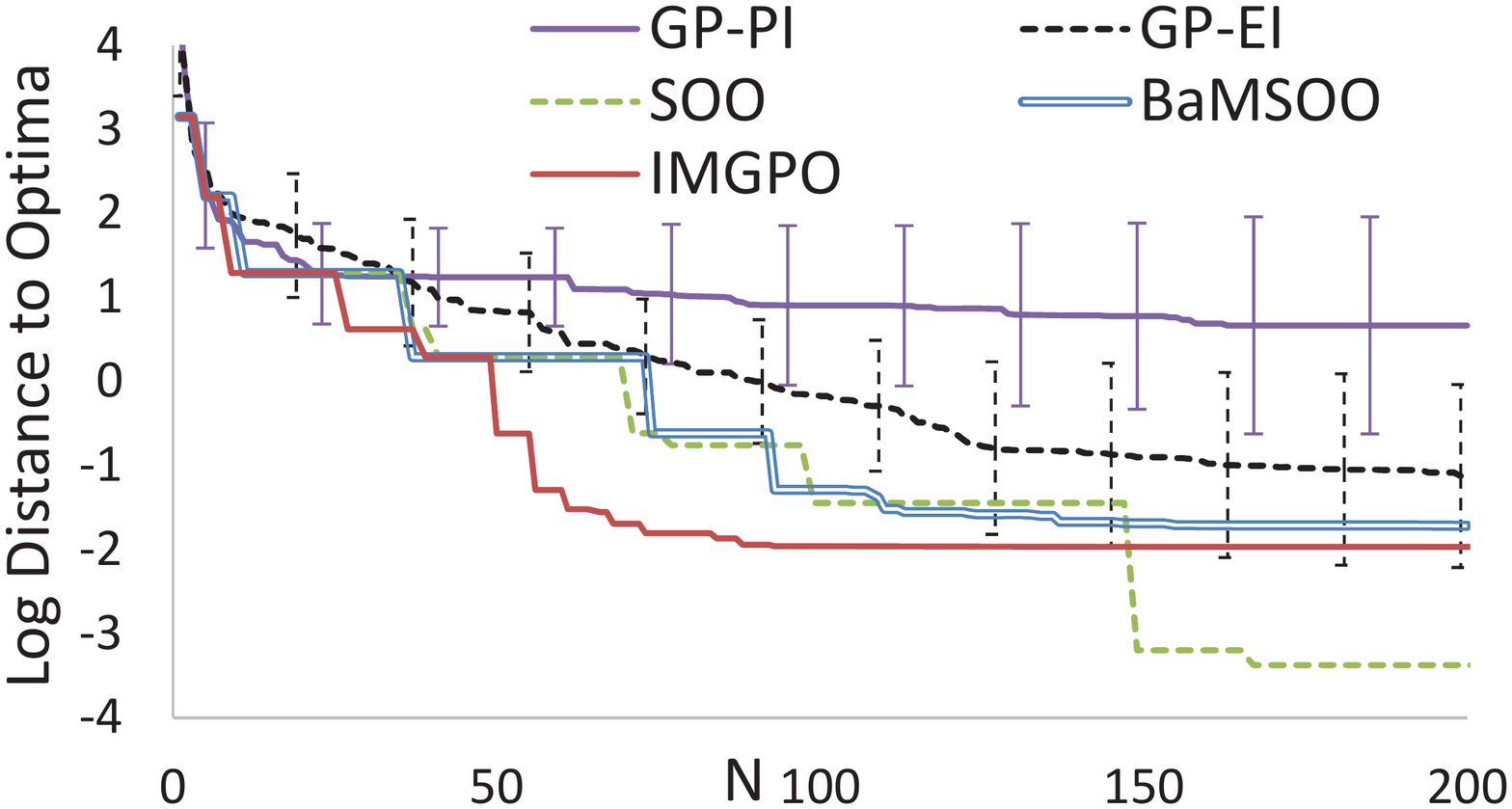}
                \vspace{-25pt}
                \caption{Rosenbrock2: [2, 3.41, 4]}
        \end{subfigure}
        \vspace{0pt}
        \begin{subfigure}[b]{0.325\linewidth}
                \includegraphics[trim = 20mm 25mm 22mm 40mm, width=\linewidth]{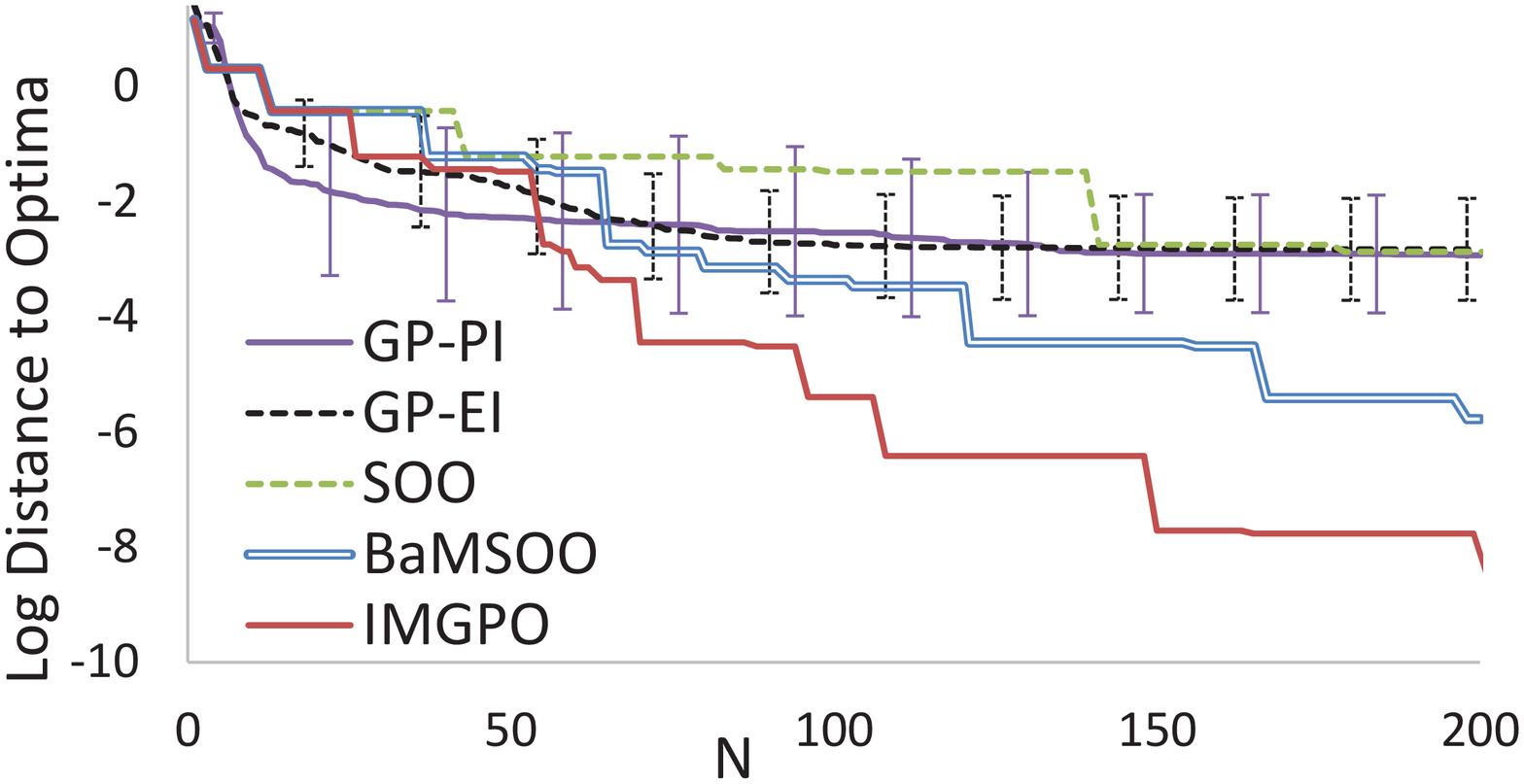}
                \vspace{-25pt}
                \caption{Branin: [2, 4.44, 2]}
        \end{subfigure}
        \begin{subfigure}[b]{0.325\linewidth}
                \includegraphics[trim = 20mm 25mm 22mm 40mm, width=\linewidth]{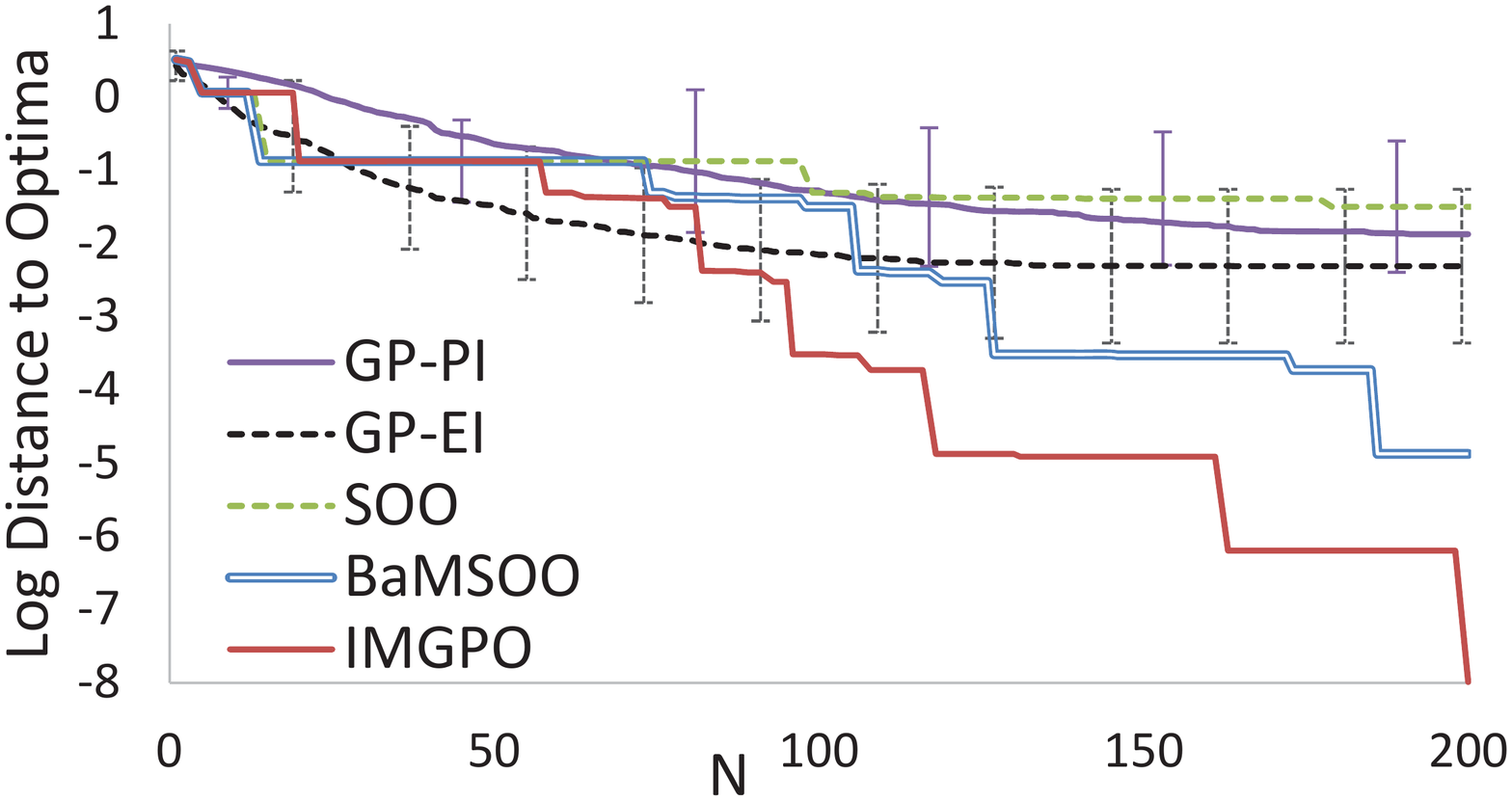}
                \vspace{-25pt}
                \caption{Hartmann3: [3, 4.11, 3]}
        \end{subfigure}
        \vspace{0pt}
        \begin{subfigure}[b]{0.325\linewidth}
                \includegraphics[trim = 20mm 25mm 22mm 40mm, width=\linewidth]{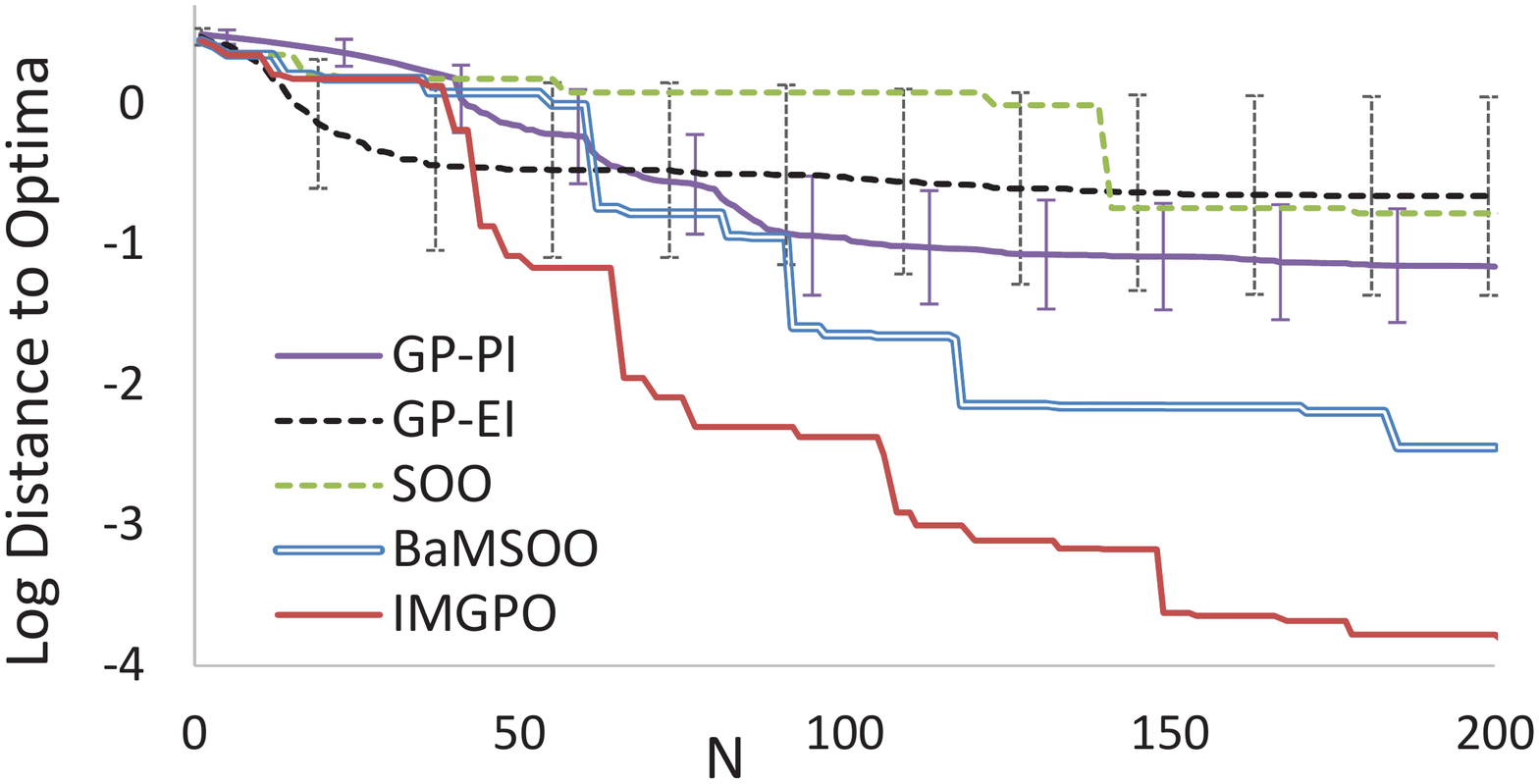}
                \vspace{-25pt}
                \caption{Hartmann6: [6, 4.39, 4]}
        \end{subfigure}
        \vspace{0pt}
        \begin{subfigure}[b]{0.325\linewidth}
                \includegraphics[trim = 20mm 25mm 22mm 40mm, width=\linewidth]{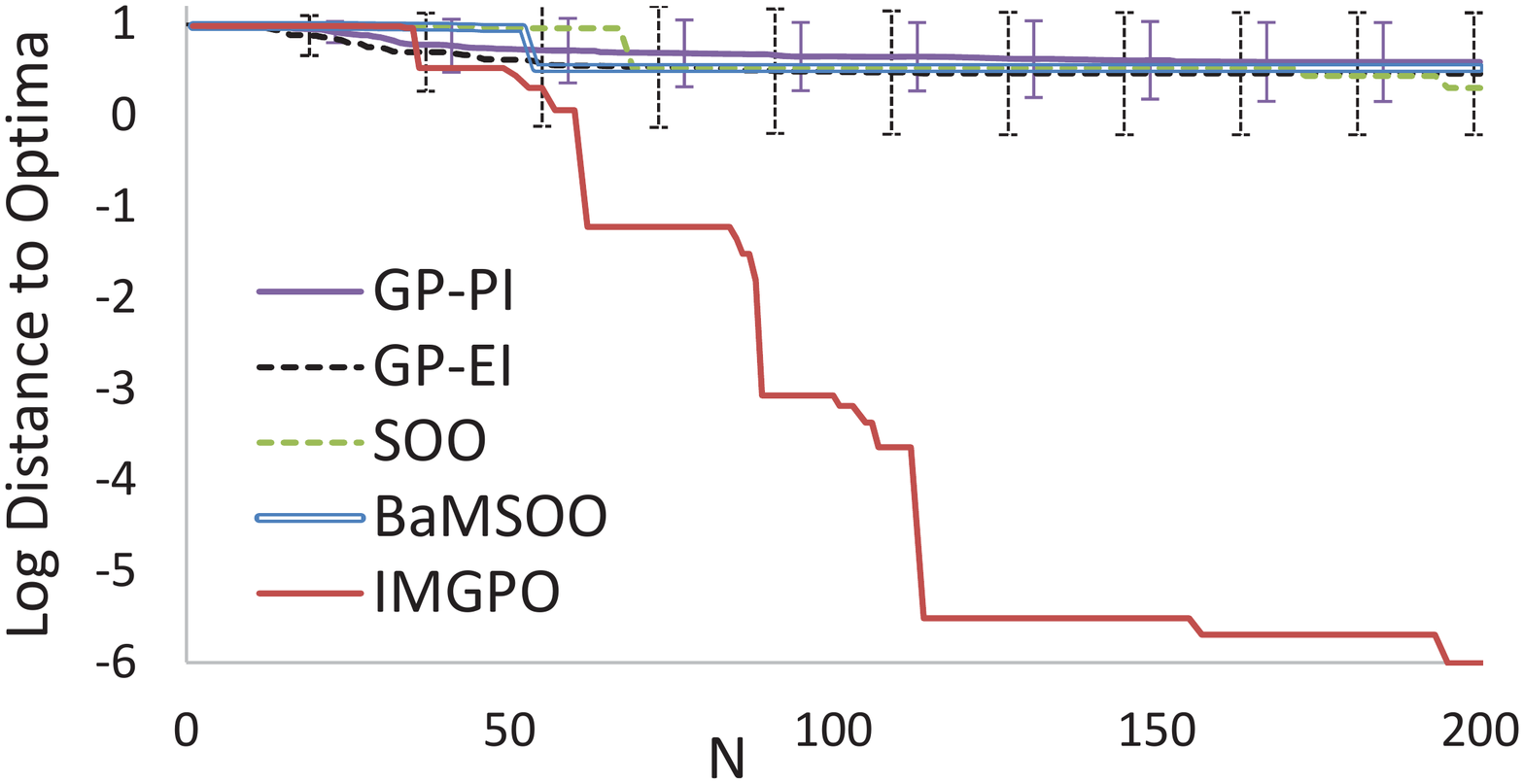}
                \vspace{-25pt}
                \caption{Shekel5: [4, 3.95, 4]}
        \end{subfigure}
        \begin{subfigure}[b]{0.325\linewidth}
                \includegraphics[trim = 20mm 25mm 22mm 40mm, width=\linewidth]{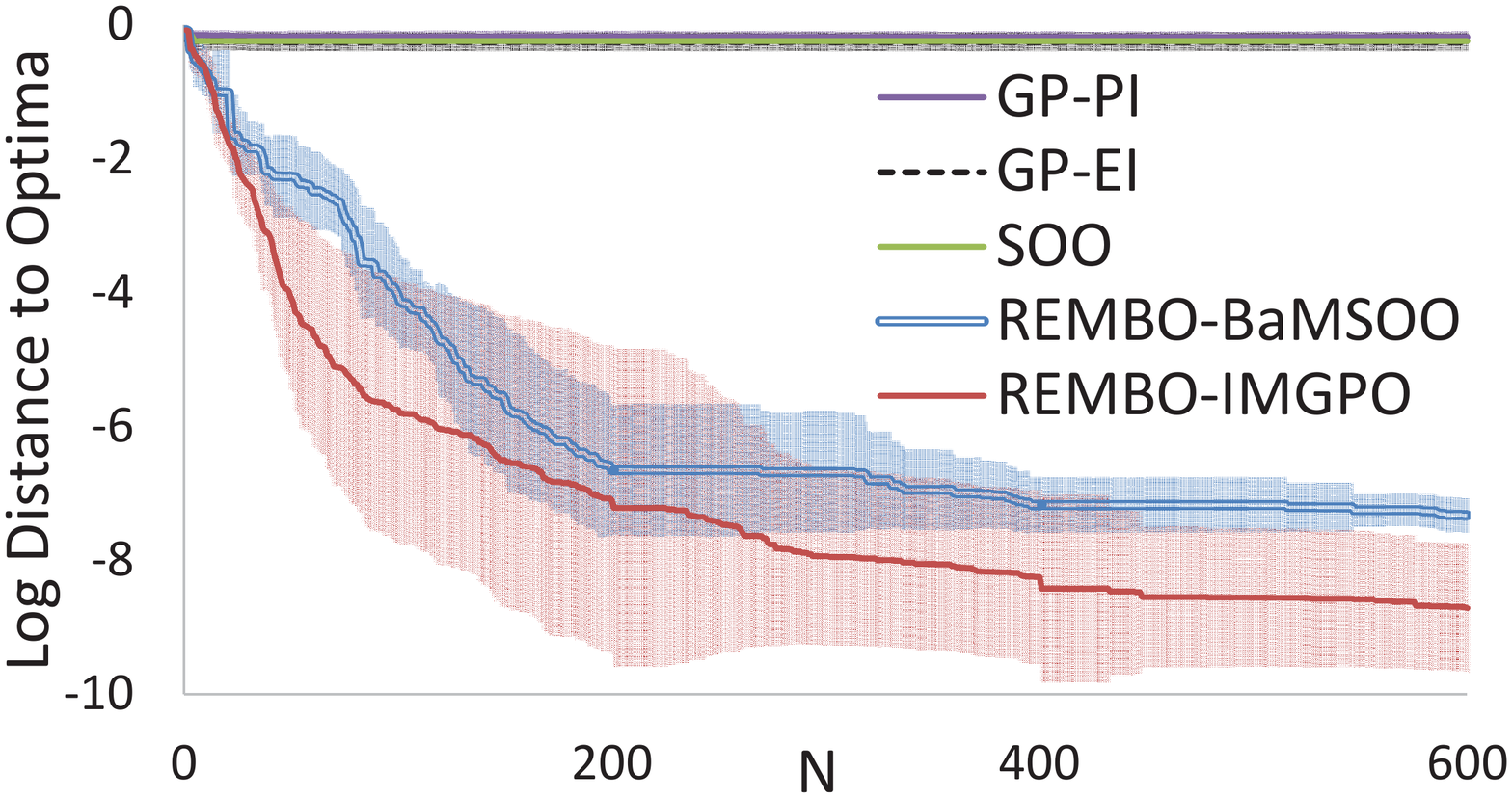}
                \vspace{-25pt}
                \caption{Sin1000: [1000, 3.95, 4]}
        \end{subfigure}
        \caption{Performance Comparison: in the order, the digits inside of the parentheses   [ ] indicate the dimensionality of each function, and the  variables \( \bar\rho_t \) and $\Xi_n$ at the end of computation for IMGPO.  }\label{fig:performance}
\vspace{-10pt}
\end{figure*}

One may improve the algorithm with different division procedures than one presented in Algorithm 1 as discussed in the supplementary material.

\section{Experiments}
In this section, we compare the IMGPO algorithm  with the SOO, BaMSOO, GP-PI and GP-EI algorithms \cite{munos2011optimistic,wang2014bayesian,snoek2012practical}. In previous work, BaMSOO  and GP-UCB were tested with a pair of a handpicked  good kernel and hyperparameters for each  function \cite{wang2014bayesian}. In our experiments, we assume that the knowledge of  good kernel and hyperparameters is unavailable, which is usually the case in practice. Thus, for IMGPO,  BaMSOO,  GP-PI and GP-EI,  we simply used one of the most popular
kernels, the isotropic Matern kernel with \(\nu=5/2\). This is given by $\kappa({x,x'})= g ( \sqrt{5|| {x-x'}||^2/l} )$, where $g(z) = {\sigma ^2}(1 + z + z^2/3)\exp ( - z)$. Then, we blindly initialized the hyperparameters to \(\sigma=1\) and \(l=0.25\) for all the experiments; these values were updated with an empirical Bayesian method after each iteration. To compute the UCB by GP, we used  \(\eta=0.05\) for IMGPO and BaMSOO. For IMGPO, \(\Xi_{max}\) was fixed to be $2^2$ (the effect of selecting different values is discussed later). For BaMSOO and SOO, the parameter \(h_{max}\) was set to \(\sqrt n\), according to Corollary 4.3 in \cite{munos2011optimistic}. For GP-PI and GP-EI, we used the SOO algorithm and a local optimization method using gradients  to solve the auxiliary optimization. For   SOO, BaMSOO and IMGPO, we used the corresponding deterministic division procedure (given \(\Omega\), the initial point is fixed and no randomness exists). For GP-PI and GP-EI, we randomly initialized the first evaluation point and report the mean and one standard deviation for 50 runs.

The experimental results for eight different objective functions  are shown in Figure 2.
The vertical axis is log$_{10}(f( x^*)-f( x^+))$, where \(f( x^*)\) is the global optima and  \(f( x^+)\) is the best value found by the algorithm. Hence, the lower the plotted value on the vertical axis, the better the algorithm's performance. The last five functions are standard benchmarks for global optimization \cite{simulationlib}. The first two were used in \cite{munos2011optimistic} to test SOO, and can be written as \(f_{sin1}(x)=(\sin(13x)\sin+1)/2\) for Sin1 and \(f_{sin2}( x)=f_{sin1}(x_1)f_{sin1}(x_2)\) for Sin2. The form of the third function is given in Equation (16) and Figure 2 in \cite{mcdonald2007global}. The last function is Sin2 embedded in 1000 dimension in the same manner described in Section 4.1 in \cite{wang2013bayesian}, which is used here to illustrate a possibility of using IMGPO as a main subroutine to scale up to higher dimensions with additional assumptions. For this function, we used REMBO \cite{wang2013bayesian} with IMGPO and BaMSOO as its Bayesian optimization subroutine. All of these functions are multimodal, except for Rosenbrock2, with dimensionality from 1 to 1000.

\begin{table}[t!] \footnotesize
\caption{Average CPU time  (in seconds) for the experiment with each test function}
\vspace{-8pt}
\begin{center}
\begin{tabular}{ l | c | c | c |c|c|c|c|c}
\hline
\textbf{Algorithm} & \textbf Sin1 & Sin2 & Peaks & Rosenbrock2 & Branin & Hartmann3 & Hartmann6 & Shekel5 \\ \hline
GP-PI & 29.66 & 115.90 & 47.90 & 921.82 & 1124.21 & 573.67&657.36 & 611.01\\
GP-EI & 12.74 & 115.79 & 44.94 &893.04 & 1153.49 &562.08 &604.93& 558.58\\
SOO & 0.19 & 0.19 & 0.24  & 0.744& 0.33& 0.30& 0.25& 0.29\\
BaMSOO & 43.80 & 4.61 & 7.83  & 12.09 &14.86 &14.14 &26.68&371.36\\
IMGPO & 1.61 & 3.15 & 4.70 & 11.11 & 5.73&6.80 &13.47 & 15.92\\ \hline
\end{tabular}
\end{center}
\label{aaa}
\vspace{-9pt}
\end{table}
As we can  see from Figure 2, IMGPO outperformed the other algorithms in general. SOO produced the competitive results for Rosenbrock2 because our GP prior was misleading (i.e., it did not model the objective function well and thus the property \(f( x) \leq {\cal U} ( x|{\cal D})\) did not hold many times). As can be seen in Table 1, IMGPO is much faster than traditional GP optimization methods although it is slower than SOO. For Sin 1, Sin2, Branin and Hartmann3, increasing $\Xi_{max}$ does not affect IMGPO because $\Xi_{n}$ did not reach $\Xi_{max}=2^2$ (Figure 2). For the rest of the test functions, we would be able to improve the performance of IMGPO by increasing $\Xi_{max}$ at the cost of extra CPU time.

\section{Conclusion}
We have presented the first   GP-based optimization method with an exponential convergence rate \(O \left( \lambda^{N+N_{gp}}  \right)\)  ($\lambda<1$) \textit{without} the need of  auxiliary optimization and the \(\delta\)-cover sampling. Perhaps more importantly in the viewpoint of a broader global optimization community, we have provided a  practically oriented analysis framework, enabling us to see why \textit{not} relying on a particular bound is advantageous, and how a non-tight bound can still be useful (in Remarks 1, 2 and 3). Following the advent of the DIRECT algorithm, the literature diverged along two paths, one with a particular bound  and one without. GP-UCB can be categorized into the former. Our approach illustrates the benefits of combining  these two paths.

As stated in Section 3.1, our solution idea was  to use  a bound-based method but rely less on the estimated bound by considering all the possible bounds. It would be interesting to see if a similar principle can be applicable to other types of bound-based methods such as planning algorithms (e.g., A* search and the UCT or FSSS algorithm \cite{walsh2010integrating}) and learning algorithms (e.g., PAC-MDP algorithms \cite{strehl2009reinforcement}).

\subsubsection*{Acknowledgments}
The authors would like to thank Dr. Remi Munos for his thoughtful comments and suggestions.
We gratefully acknowledge support from NSF grant 1420927, from ONR grant N00014-14-1-0486, and from ARO grant W911NF1410433. Kenji Kawaguchi was supported in part by the Funai Overseas Scholarship. Any opinions, findings, and conclusions or recommendations expressed in this material are those of the authors and do not necessarily reflect the views of our sponsors.

\newpage
\appendix
\renewcommand{\thesubsection}{\Alph{subsection}}

\vbox{\hsize\textwidth
\linewidth\hsize \vskip 0.1in \toptitlebar \centering
{\LARGE\bf Bayesian Optimization with Exponential Convergence: Supplementary Material\par}
\bottomtitlebar 
\vskip 0.3in minus 0.1in
}


In this supplementary material, we provide the proofs of the theoretical results. Along the way, we  also prove regret bounds for a general class of algorithms, the result of which may be used to design a new algorithm.

We first provide a known property of the upper confidence bound of GP.
\begin{lemma}
(Bound Estimated by GP)  According to the belief encoded in the GP prior/posterior\footnote{Thus, the probability  in this analysis should be seen as  that of \textit{the subjective view}. If we assume that  \(f\) is indeed a sample from the GP, we have the same result  with \textit{the objective view}  of probability.   }, for any \( x\), $f( x) \leq {\cal U} ( x|{\cal D})$  holds during the execution of Algorithm 1   with probability at least \(1-\eta\).
\end{lemma}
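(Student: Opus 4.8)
The plan is to reduce the claim to a pointwise statement at each individual call to $\mathcal{U}$ and then aggregate the failure probabilities by a union bound over the countably many calls. The crucial structural observation is that the algorithm never needs $f(x) \le \mathcal{U}(x|\mathcal{D})$ to hold \emph{uniformly} over $\Omega$; it only ever queries $\mathcal{U}$ at countably many points, and the prescribed multiplier $\varsigma_M = \sqrt{2\log(\pi^2 M^2/(12\eta))}$ assigns a progressively larger confidence width to the $M$-th such call. I would therefore index the calls by $M = 1, 2, 3, \dots$ and control the total failure probability by the sum over $M$ of the per-call failure probabilities.

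First I would fix the $M$-th call, say at the point $x_M$ against the dataset $\mathcal{D}$ available at that moment. Under the belief encoded in the GP posterior, the marginal law of $f(x_M)$ is exactly $\mathcal{N}(\mu(x_M|\mathcal{D}), \sigma^2(x_M|\mathcal{D}))$, as derived in Section 2. The bound fails at this call precisely on the event $\{f(x_M) > \mu(x_M|\mathcal{D}) + \varsigma_M\, \sigma(x_M|\mathcal{D})\}$, which after standardizing becomes $\{Z > \varsigma_M\}$ for a standard normal $Z$. Applying the Gaussian tail bound $\Pr(Z > c) \le \tfrac{1}{2} e^{-c^2/2}$ and substituting $\varsigma_M^2 = 2\log(\pi^2 M^2/(12\eta))$ gives a per-call failure probability of at most $\tfrac{1}{2}\cdot \tfrac{12\eta}{\pi^2 M^2} = \tfrac{6\eta}{\pi^2 M^2}$. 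A union bound over all calls then yields a total failure probability of at most $\sum_{M=1}^\infty \tfrac{6\eta}{\pi^2 M^2} = \tfrac{6\eta}{\pi^2}\cdot\tfrac{\pi^2}{6} = \eta$, so the bound holds simultaneously at every call with probability at least $1-\eta$, which is exactly the claim. The particular form of $\varsigma_M$ is engineered so that the resulting tail masses form the convergent Basel series that sums to precisely $\eta$.

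The main obstacle is that the query points $x_M$ are chosen \emph{adaptively}: the point at the $M$-th call depends on all previously observed values, so $x_M$ is a random, data-dependent quantity rather than a fixed point, and one cannot naively invoke the tail estimate at a ``fixed'' $x$. The resolution is exactly the subjective Bayesian framing emphasized in the footnote to the lemma: conditioned on the data $\mathcal{D}$ present at the $M$-th call, the posterior marginal at \emph{any} measurable choice of query point is Gaussian with the stated mean and variance, so the per-call estimate $\tfrac{6\eta}{\pi^2 M^2}$ holds irrespective of how $x_M$ was selected. I would make this rigorous by conditioning on the history before each call and noting that the deterministic multiplier $\varsigma_M$ depends only on the call index, not on the realized point; this decouples the tail estimate from the adaptive selection rule, so the per-call bound depends only on $M$ and the countable union bound goes through unchanged. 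With that subtlety handled, the convergence of $\sum_M M^{-2}$ supplies the final $\eta$ and completes the argument.
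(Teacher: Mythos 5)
Your proposal is correct and follows essentially the same route as the paper's proof: a Gaussian tail bound $\Pr(Z>\varsigma_M)\le\frac{1}{2}e^{-\varsigma_M^2/2}$ at each call, followed by a union bound over the call index $M$ so that the choice of $\varsigma_M$ makes the failure probabilities sum to $\eta$ via $\sum_M M^{-2}=\pi^2/6$. Your explicit handling of the adaptive selection of query points (by conditioning on the history at each call) is a detail the paper leaves implicit in its citation of Lemma 5.1 of Srinivas et al., but it does not change the argument.
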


\begin{proof}
It follows the proof of lemma 5.1 of \cite{srinivas2010gaussian}.
From the property of the standard gaussian distribution, \(\Pr(f( x) > {\cal U} ( x|{\cal D}))<\frac{1}{2}e^{-\varsigma^{2}_M/2}\). Taking union bound on the entire execution of Algorithm 1, \(\Pr(f( x) > {\cal U} ( x|{\cal D}) \; \forall M\ge 1)<\frac{1}{2}\sum_{M=1}^{\infty} e^{-\varsigma^{2}_M/2}\). Substituting \(\varsigma_{M} = \sqrt{2 \log(\pi^{2}M^{2}/12\eta)}\), we obtain the statement.
\end{proof}

Our algorithm has a concrete division procedure in line 27 of Algorithm 1. However, one may improve the algorithm with different division procedures. Accordingly, we first derive abstract version of regret bound for the IMGPO (Algorithm 1) under a family of division procedures  that satisfy  Assumptions 3 and 4. After that, we provide a proof for the main results in the paper.

\subsection{With Family of Division Procedure}
In this section, we modify the result obtained by \cite{munos2011optimistic}. Let \(x_{h,i}\) to be any point in the region covered by the \(i\)\textsuperscript{th} hyperinterval at depth \(h\), and \(x_{h,i}^{*}\) be the global optimizer that may exist in the \(i\)\textsuperscript{th} hyperinterval at depth \(h\). The previous work provided the regret bound of the SOO algorithm with a family of division procedure that satisfies the following two assumptions.

\setcounter{assum}{2}
\begin{assum}
(Decreasing diameter) There exists a diameter function \(\delta (h) > 0\) such that, for any hyperinterval \({ \omega_{h,i}} \subset \Omega\) and its center \(c_{h,i} \in \omega_{h,i} \) and any  \(x_{h,i} \in \omega_{h,i}\), we have \(\delta (h) \ge {\sup _{{x_{h,i}}}}\ell ({x_{h,i}},{c_{h,i}})\) and \(\delta (h - 1) \ge \delta (h)\) for all \(h\ge\)1.
\end{assum}

\begin{assum}
(Well-shaped cell) There exists  \(\nu >\) 0  such that any hyperinterval \({\omega _{h,i}}\) contains at least an \(\ell \)-ball of radius \(\nu \delta (h)\) centered in \({\omega _{h,i}}\).
\end{assum}
Thus, in this section, hyperinterval is not restricted to hyperrectangle. We now revisit the definitions of several terms and variables used in \cite{munos2011optimistic}. Let the \(\epsilon\)-optimal space \(X_\epsilon\) be defined as \(X_\epsilon: = \{ x \in \Omega :f(x) + \epsilon \ge f({x^ * })\} \). That is, the \(\epsilon\)-optimal space  is the set of input vectors whose function value is at least \(\epsilon\)-close to  the global optima. To bound the number of hyperintervals relevant to this \(\epsilon\)-optimal space, we define a near-optimality dimension as follows.

\setcounter{defn}{2}
\begin{defn}
(Near-optimality dimension) The near-optimality dimension is the smallest \(d>0\) such that, there exists \(C>0,\) for all \(\epsilon > 0,\) the maximum number of disjoint \(\ell \)-balls of radius \(\nu\epsilon\) with center in the \(\epsilon\)-optimal space \(X_\epsilon\) is less than  \(C \epsilon^{-d}\).
\end{defn}

Finally, we define the set of \(\delta \)-optimal hyperintervals \(I_{\delta(h)}\) as \({I_{\delta(h)}}: = \{c \in \Omega: f(c_{}) + \delta (h) \ge f({x^ * }), c \text{ is the center point of the interval}, \omega_{h,i}, \text{for some } (h,i)\} \).
The \(\delta \)-optimal hyperinterval \({I_{\delta(h)}}\) is used to relate the hyperintervals to the \(\epsilon\)-optimal space. Indeed, the \(\delta \)-optimal hyperinterval \({I_{\delta(h)}}\) is almost identical to the \(\delta (h)\)-optimal space \(X_{\delta (h)}\), except that \({I_{\delta(h)}}\) is focused on the center points whereas \({X_{\delta (h )}}\) considers the whole input vector space. In the following, we use \(|{I_{\delta (h )}}|\) to denote the number of \({I_{\delta (h )} }\) and derive its upper bound.

\begin{lemma}
(Lemma 3.1 in \cite{munos2011optimistic}) Let \(d\) be the near-optimality dimension and \(C\) denote the corresponding constant in Definition 1.  Then, the number of \(\delta \)-optimal hyperintervals is bounded by
\(|{I_{\delta(h)}}| \le C\delta {(h)^{ - d}}\).
\end{lemma}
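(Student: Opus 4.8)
The plan is to reduce counting the $\delta(h)$-optimal hyperintervals to a packing quantity of the exact form appearing in Definition 3, and then apply that definition with $\epsilon = \delta(h)$. Since this is precisely Lemma 3.1 of \cite{munos2011optimistic}, I would follow Munos's argument adapted to the center-based definition of $I_{\delta(h)}$ used here.

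First I would note the membership fact. By the definition of $I_{\delta(h)}$, the center $c_{h,i}$ of every $\delta(h)$-optimal hyperinterval $\omega_{h,i}$ satisfies $f(c_{h,i}) + \delta(h) \ge f(x^*)$, so each such center lies in the $\delta(h)$-optimal space $X_{\delta(h)}$. Thus the centers of all hyperintervals counted by $|I_{\delta(h)}|$ form a set of points inside $X_{\delta(h)}$, which is exactly where the near-optimality dimension places its ball centers.

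Next I would produce the disjoint balls. By Assumption 4 (well-shaped cells), each hyperinterval $\omega_{h,i}$ contains an $\ell$-ball of radius $\nu\delta(h)$ centered at its center $c_{h,i}$. Because the hyperintervals at a fixed depth $h$ have pairwise disjoint interiors (the partition at depth $h$ covers the region with non-overlapping cells), and each such ball is contained in its own hyperinterval, the balls attached to distinct $\delta(h)$-optimal hyperintervals are pairwise disjoint. Therefore $|I_{\delta(h)}|$ is bounded above by the number of pairwise-disjoint $\ell$-balls of radius $\nu\delta(h)$ whose centers lie in $X_{\delta(h)}$. Invoking Definition 3 with $\epsilon = \delta(h)$ — which asserts this count is at most $C\epsilon^{-d}$ — immediately gives $|I_{\delta(h)}| \le C\delta(h)^{-d}$, as claimed.

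The step I expect to be the main (if modest) obstacle is ensuring the two structural facts dovetail exactly with Definition 3: namely (a) that the relevant ball centers are genuinely in $X_{\delta(h)}$, which relies on reading $I_{\delta(h)}$ through its center points $c_{h,i}$ rather than through the whole cell, and (b) that the balls supplied by Assumption 4 are truly disjoint, which combines the well-shaped-cell radius $\nu\delta(h)$ with the disjointness of the depth-$h$ partition. The matching of the radius $\nu\delta(h)$ in Assumption 4 with the radius $\nu\epsilon$ in Definition 3 at $\epsilon=\delta(h)$ is what makes the final application a single line; once (a) and (b) are verified there is no remaining computation.
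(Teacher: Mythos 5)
The paper does not actually prove this lemma; it is imported verbatim as Lemma 3.1 of \cite{munos2011optimistic}, so there is no in-paper argument to compare against. Your reconstruction is the standard proof of that cited result and is correct: the centers of $\delta(h)$-optimal cells lie in $X_{\delta(h)}$, the pairwise-disjoint depth-$h$ cells supply pairwise-disjoint $\ell$-balls of radius $\nu\delta(h)$, and Definition 3 with $\epsilon=\delta(h)$ caps the number of such balls by $C\delta(h)^{-d}$. The only point worth flagging is that the paper's Assumption 4 literally says the ball is ``centered in $\omega_{h,i}$'' rather than centered at $c_{h,i}$; under that literal reading the ball's center need not lie in $X_{\delta(h)}$ and the one-line appeal to Definition 3 would need a small packing patch, but the intended reading (and the one used in \cite{munos2011optimistic}) is the one you adopt.
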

We are now ready to present the main result in this section. In the following, we use the term \textit{optimal hyperinterval} to indicate a hyperinterval that contains a global optimizer \( x^*\). We say a hyperinterval is \textit{dominated} by other intervals when it is rejected or not selected in step (i)-(iii). In Lemma 3, we bound the maximum size of the optimal hyperinterval. From Assumption 1, this  can be translated to the regret bound, as we shall see in Theorem 2.

\begin{lemma}
Let \(\Xi_{n} \le \min(\Xi,\Xi_{max}) \) be the largest \(\xi\) used so far with \(n\) total node expansions. Let \(h_n^ * \) be the depth of the deepest expanded node that contains a global optimizer \({x^ * }\) after \(n\) total node expansions (i.e., \(h_n^* \le n\) determines the size of the \textit{optimal hyperinterval}). Then, with probability at least \(1-\eta\), \(h_n^ * \) is bounded below by some \( h' \)  that satisfies
\begin{equation*}
n \ge   \sum_{\tau\ =1}^{\sum_{l=0} ^{h'+ \Xi} |I_l|} \rho_\tau.
\end{equation*}
\end{lemma}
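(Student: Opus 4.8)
The plan is to bound the total number of node expansions $n$ from below by carefully tracking which intervals must have been divided before the optimal hyperinterval can reach depth $h_n^*$. Let me denote $h' = h_n^*$ for the value we aim to establish. The core intuition is that the optimal hyperinterval at any depth $h$ cannot be skipped indefinitely: the algorithm must eventually divide it (or one of its ancestors already containing $x^*$), and before doing so it may be forced to divide all the \emph{other} intervals that dominate it in steps (i)--(iii). The key structural fact I would exploit is Assumption~1 together with the decreasing-diameter property (Assumption~3): the optimal hyperinterval at depth $h$ has center value at most $\delta(h)$ below $f(x^*)$, so once it is selected and not skipped, it is $\delta$-optimal and hence falls into the set $I_{\delta(h)}$ whose size is bounded by Lemma~2.

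The main steps I would carry out are as follows. First I would establish that the optimal hyperinterval is never permanently rejected by steps (i)--(iii): using Lemma~1 (so that $f(x)\le\mathcal{U}(x|\mathcal{D})$ holds with probability $\ge 1-\eta$), I would argue that on the high-probability event the UCB test in step (iii) never discards the interval containing $x^*$, and likewise that the center-value comparisons in steps (i)--(ii) can only delay, but not eliminate, its eventual expansion. This is where the $\Xi$ shift enters: the look-ahead of depth $\xi\le\Xi$ in step (iii) means the UCB comparison is made against descendants down to depth $h+\Xi$, so the relevant $\delta$-optimal intervals to count extend to depth $h'+\Xi$, which explains the upper summation limit $\sum_{l=0}^{h'+\Xi}|I_l|$. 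Second, I would count the expansions: before the optimal interval reaches depth $h'$, every interval that gets expanded ``ahead of it'' must itself be $\delta$-optimal at its own depth (otherwise it would have been dominated), so the total number of distinct intervals that can be expanded up to the point where the optimal one reaches depth $h'$ is at most $\sum_{l=0}^{h'+\Xi}|I_l|$. Each iteration $\tau$ divides exactly $\rho_\tau$ intervals (Definition~1), so summing $\rho_\tau$ over the iterations needed to exhaust these intervals gives the stated inequality $n\ge\sum_{\tau=1}^{\sum_{l=0}^{h'+\Xi}|I_l|}\rho_\tau$.

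The crux of the argument — and the step I expect to be the main obstacle — is rigorously justifying the claim that \emph{every} interval expanded before the optimal one reaches depth $h'$ must be counted among the $\delta$-optimal intervals of Lemma~2, and that the $\Xi$-shift is exactly the right bookkeeping correction. The subtlety is that steps (i)--(ii) select by center value across sizes, while step (iii) screens by UCB across a look-ahead of depth up to $\Xi$; I would need to show these two mechanisms jointly imply that any expanded non-optimal interval at depth $l$ satisfies $f(c)+\delta(l)\ge f(x^*)$ (possibly after shifting the depth index by $\Xi$ to account for the look-ahead comparison). Concretely, I would mirror the inductive domination argument from the proof of the SOO regret bound in \cite{munos2011optimistic}: assume for contradiction that the optimal interval sits at depth $h'-1$ while some non-$\delta$-optimal interval is selected for division, then derive a contradiction either with step~(ii) (a larger/competing interval would have a higher center value) or with step~(iii) via the GP bound of Lemma~1. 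Once this domination lemma is in place, the counting and the translation to the summation bound are essentially bookkeeping, and the probability $1-\eta$ is inherited directly from Lemma~1.
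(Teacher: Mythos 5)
Your proposal follows essentially the same route as the paper's proof: the paper likewise tracks the times \(T_h\) at which the optimal hyperinterval is divided, shows via the three domination mechanisms (same depth via step (i), shallower depth via step (ii), deeper depth with look-ahead \(\xi\le\Xi_n\) via step (iii) together with Lemma~1, which is exactly where the probability \(1-\eta\) enters) that any interval blocking the optimal one lies in \(I_{\delta(l)}\) for \(l\le h+1+\Xi_n\), and then telescopes over \(h\), paying \(\rho_\tau\) expansions per blocking iteration while avoiding double-counting. The one caution is a wording slip in your counting step: it is only the intervals that \emph{dominate} the optimal one in steps (i)--(iii) that must be \(\delta\)-optimal, not ``every interval that gets expanded ahead of it'' --- many non-\(\delta\)-optimal intervals at other depths are expanded in the same iterations, and that is precisely what the factor \(\rho_\tau>1\) accounts for.
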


\begin{proof}
Let \(T_h\) denote the time at which the optimal hyperinterval is further divided. We prove the statement by showing that the time difference \(T_{h+1}-T_h\) is  bounded by the number of \(\delta\)-optimal hyperintervals.  To do so, we first note that there are three types of hyperinterval that can dominate an optimal hyperinterval  \(c_{h+1,*}\) during the time \([T_h,T_{h+1}-1]\), all of which belong to  \(\delta\)-optimal hyperintervals \(I_{\delta}\). The first type  has the same size (i.e., same depth \(h\)), \(c_{h+1,i}\). In this case,
\[f(c_{h+1,i}) \ge f(c_{h+1,*})  \ge f({x_{h+1,*}^*})- \delta (h+1),\]
where the first inequality is due to line 10 (step (i)) and the second follows Assumptions 1 and  2. Thus, it must be \(c_{h+1,i} \in I_{h+1} \). The second case is where the optimal hyperinterval may be dominated by a hyperinterval of larger size (depth \(l < h+1\)), \(c_{l,i}\). In this case, similarly,
\[f(c_{l,i}) \ge f(c_{h+1,*}) \ge f({x_{h+1,*}^*})-\delta (l),\]
where the first inequality is due to lines 11 to 12 (step (ii)) and thus \(c_{l,i} \in I_l \). In the final scenario, the optimal hyperinterval is dominated by a hyperinterval of smaller size (depth \(h+1+\xi \)), \(c_{h+1+\xi,i}\). In this case,
\[f(c_{h+1+\xi,i}) \ge z(h+1,*)\ge f({x_{h+1,*}^*}) - \delta (h+1+\xi) \]
with probability at least \(1-\eta\) where \(z(\cdot,\cdot)\) is defined in line 21 of Algorithm 1. The first inequality is due to lines 19 to 23 (step (iii))  and the second inequality follows Lemma 1 and Assumptions 1 and 3. Hence, we can see that \(c_{h+1+\xi,i} \in I_{h+1+\xi}\).

For all of the above arguments, the temporarily assigned \({\cal U}\) under GP has no effect. This is because   the algorithm still covers the above three types of \(\delta\)-optimal hyperintervals \(I_{\delta}\), as \({\cal U} \ge f\) with probability at least \(1-\eta\) (Lemma 1). However, these are only expanded  based on \(f\) because of the temporary nature of  \({\cal U}\). Putting these results together,
\[T_{h+1}-T_h \le \sum_{\tau=1}^{\sum_{l=1} ^{h+1+\Xi_{n}} |I_{\delta(l)}|} \rho_\tau.\]
Since if one of the \(I_\delta\) is divided during \([T_h,T_{h+1}-1]\),  it cannot be divided again during another time period,
\[\sum_{h=0}^{h_n^*} T_{h+1}-T_h \le \sum_{\tau=1}^{\sum_{l=1} ^{h_n^*+1+\Xi_{n}} |I_l|} \rho_\tau,\]
where on the right-hand side, we could combine the summation \(\sum^{h_n^*}_{h=0} \) and \(\sum_{\tau=1}^{\sum_{l=1} ^{h+1+\Xi_{n}} |I_{\delta(l)}|}\) into the one, because each \(h\) in the summation refers to the same \(\delta\)-optimal interval \(I_{\delta(l)}\) with \(l \le h_n^*+1 +\Xi_{n}\), and should not be double-counted.  As \(\sum_{h=0}^{h_n^*} T_{h+1}-T_h =T_{h_n^*+1} -T_0\), \(T_0=1\) and \(|I_{\delta(0)}|=1\),
 \[T_{h_n^*+1} \le 1+ \sum_{\tau=1}^{\sum_{l=1} ^{h_n^*+1+\Xi_{n}} |I_l|} \rho_\tau  \le \sum_{\tau=1}^{\sum_{l=0} ^{h_n^*+1+\Xi_{n}} |I_l|} \rho_\tau.\]
As \(T_{{h_n^*}+1} > n\) by definition, for any \(h'\) such that $
\sum_{\tau=1}^{\sum_{l=0} ^{h'+\Xi_{n}} |I_l|} \rho_\tau \le n <  \sum_{\tau=1}^{\sum_{l=0} ^{h_n^*+1+\Xi_{n}} |I_l|} \rho_\tau $, we have \(h_n^*>h'\).
\end{proof}

With Lemmas 2 and 3, we are ready to present a finite regret bound with the family of division procedures.

\setcounter{theorem}{1}
\begin{theorem}
Assume Assumptions 1, 3, and 4. Let \(h(n)\) be the smallest integer \(h\) such that
\begin{displaymath}
n \le  \sum_{\tau=1}^{C \sum_{l=0} ^{h+\Xi_{n}} \delta (l)^{-d}} \rho_\tau.
\end{displaymath}
Then, with probability at least \(1-\eta\), the regret of the IMGPO with any general division procedure is bounded as
\vspace{-2pt}
\begin{displaymath}
r_n \le \delta(h(n)-1).
\end{displaymath}
\end{theorem}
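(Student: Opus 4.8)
The plan is to prove Theorem 2 in two movements: first convert the depth $h_n^*$ of the deepest expanded optimal hyperinterval into a regret bound via Assumptions 1 and 3, and then lower-bound $h_n^*$ by combining Lemmas 2 and 3 with the defining property of $h(n)$.

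\textbf{From depth to regret.} Let $c_{h_n^*,*}$ be the center of the deepest expanded hyperinterval containing the global optimizer $x^*$. Because a node is divided only after it survives step (ii), at which point any GP-based dummy value attached to its center has already been resolved into a genuine evaluation (lines 13--17 of Algorithm 1), the value $f(c_{h_n^*,*})$ has truly been observed, so $f(x^+)\ge f(c_{h_n^*,*})$. Applying Assumption 1 with $x=c_{h_n^*,*}$ and then Assumption 3 (using that $x^*$ sits in the hyperinterval centered at $c_{h_n^*,*}$ at depth $h_n^*$, and that $\ell$ is symmetric), I obtain
\[
r_n = f(x^*)-f(x^+) \le f(x^*)-f(c_{h_n^*,*}) \le \ell(x^*,c_{h_n^*,*}) \le \delta(h_n^*).
\]

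\textbf{Lower-bounding $h_n^*$.} By Lemma 2, $|I_{\delta(l)}|\le C\delta(l)^{-d}$, so $\sum_{l=0}^{h'+\Xi_n}|I_l| \le C\sum_{l=0}^{h'+\Xi_n}\delta(l)^{-d}$ for every $h'$. Since each $\rho_\tau\ge 1>0$, enlarging the upper index of a $\rho_\tau$-sum can only increase it, whence
\[
\sum_{\tau=1}^{\sum_{l=0}^{h'+\Xi_n}|I_l|}\rho_\tau \;\le\; \sum_{\tau=1}^{C\sum_{l=0}^{h'+\Xi_n}\delta(l)^{-d}}\rho_\tau .
\]
Now take $h'=h(n)-1$. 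Because $h(n)$ is the \emph{smallest} integer $h$ with $n\le \sum_{\tau=1}^{C\sum_{l=0}^{h+\Xi_n}\delta(l)^{-d}}\rho_\tau$, the index $h'=h(n)-1$ violates this inequality, giving $n>\sum_{\tau=1}^{C\sum_{l=0}^{h'+\Xi_n}\delta(l)^{-d}}\rho_\tau \ge \sum_{\tau=1}^{\sum_{l=0}^{h'+\Xi_n}|I_l|}\rho_\tau$. Thus $h'=h(n)-1$ satisfies the hypothesis of Lemma 3, so with probability at least $1-\eta$ one concludes $h_n^* > h(n)-1$, i.e. $h_n^* \ge h(n)$.

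\textbf{Combining.} Since $\delta$ is nonincreasing (Assumption 3) and $h_n^*\ge h(n)>h(n)-1$, I get $\delta(h_n^*)\le \delta(h(n)-1)$, and therefore $r_n\le \delta(h(n)-1)$ with probability at least $1-\eta$, as claimed. The main obstacle is the bookkeeping in the second movement: one must align the upper summation limits between Lemma 3 (phrased with $|I_l|$) and the definition of $h(n)$ (phrased with $C\delta(l)^{-d}$), and track strict versus non-strict inequalities carefully so that the minimality of $h(n)$ yields exactly the condition $n\ge\sum_\tau\rho_\tau$ that Lemma 3 consumes. The monotonicity step must be justified through $\rho_\tau\ge 1$, and the $1-\eta$ probability simply propagates from Lemma 3 (which already inherits it from Lemma 1).
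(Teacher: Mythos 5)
Your proof is correct and follows essentially the same route as the paper's: bound $r_n$ by $\delta(h_n^*)$ via Assumptions 1 and 3, then use the minimality of $h(n)$ together with Lemma 2 to verify the hypothesis of Lemma 3 at $h'=h(n)-1$. The only (immaterial) difference is that you extract the strict conclusion $h_n^*\ge h(n)$ from Lemma 3 where the paper settles for $h_n^*\ge h(n)-1$; since $\delta$ is nonincreasing, either version yields $r_n\le\delta(h(n)-1)$.
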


\begin{proof}
Let \(c(n)\) and \(c_{h_{n}^{*},*}\) be the center point expanded at the \(n\)th expansion and the optimal hyperinterval containing a global optimizer \(x^{*}\), respectively. Then, from Assumptions 1, 3, and 4, \(f(c(n)) \ge f(c_{h_n^*,*}) \ge f^* - \delta(h_n^*) \), where \(f^{*}\) is the global optima. Hence, the regret bound is \(r_{h} \le \delta(h_n^*)\). To find a lower  bound for the quantity \(h_n^*\), we first relate  \(h(n)\) to Lemma 3 by
\[
n > \sum_{\tau=1}^{C \sum_{l=0} ^{h(n)+\Xi_{n}-1} \delta (l)^{-d}} \rho_\tau\ \ge  \sum_{\tau=1}^{\sum_{l=0} ^{h(n)+ \Xi_{n}-1} |I_l|} \rho_\tau,
\]
 where the first inequality comes from the definition of \(h(n)\), and the second follows from Lemma 2. Then, from Lemma 3, we have \(h_n^* \ge h(n)-1 \). Therefore, \( r_n \le \delta(h_n^*) \le \delta(h(n)-1) \).
\end{proof}

\begin{assum}
(Decreasing diameter revisit) The decreasing diameter defined in Assumption 3 can be written as  \(\delta(h)=c_{1}\gamma^{h/D}\) for some \(c_1>0\) and \(\gamma<1\) with a division procedure that requires \(c_{2}\)  function evaluations  per  node expansion.
\end{assum}

\begin{corollary}
Assume Assumptions 1, 3, 4, and 5. Then, if \(d=0\),
with probability at least \(1-\eta\),
\begin{displaymath}
r_N \le O \left( \exp\left(-\frac{N+N_{gp}}{ c_{2}CD\bar \rho_{t}} \right)  \right).
\end{displaymath}
If \(d>0\),
with probability at least \(1-\eta\),\begin{displaymath}
r_N \le O \left( \left( \frac{1}{N+N_{gp}} \right)^{1/d} \left(-\frac{c_2C\bar \rho_{t}}{1-\gamma^{d/D}} \right)^{1/d} \gamma^{-\frac{1}{D}} \right).
\end{displaymath}
\end{corollary}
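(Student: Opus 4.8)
The plan is to specialize the abstract bound $r_n \le \delta(h(n)-1)$ of Theorem 2 to the geometric diameter of Assumption 5 and then trade the count of node expansions $n$ for the evaluation count $N+N_{gp}$. First I would insert $\delta(l)=c_1\gamma^{l/D}$, so that $\delta(l)^{-d}=c_1^{-d}\gamma^{-ld/D}$, into the defining relation $n \le \sum_{\tau=1}^{C\sum_{l=0}^{h+\Xi_n}\delta(l)^{-d}}\rho_\tau$, and evaluate the inner sum $\sum_{l=0}^{h+\Xi_n}\delta(l)^{-d}$ in closed form. This sum behaves differently in the two regimes: when $d=0$ it collapses to the plain count $h+\Xi_n+1$, whereas when $d>0$ it is a geometric series of ratio $\gamma^{-d/D}>1$ summing to $c_1^{-d}\frac{\gamma^{-(h+\Xi_n+1)d/D}-1}{\gamma^{-d/D}-1}$. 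Writing $m(h)$ for $C$ times this sum, the Theorem 2 condition reads $n\le\sum_{\tau=1}^{m(h)}\rho_\tau$.

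Next I would replace the exploration-loss sum by its average. Since each $\rho_\tau\ge1$ we have $\bar\rho_t\ge1$ and $n=\sum_{\tau=1}^t\rho_\tau\le t\bar\rho_t$, and by the definition $\bar\rho_t=\max_{t'\le t}\frac1{t'}\sum_{\tau=1}^{t'}\rho_\tau$ any partial sum with $m\le t$ obeys $\sum_{\tau=1}^{m}\rho_\tau\le m\bar\rho_t$. Combining these two facts shows that $n\le\sum_{\tau=1}^{m(h)}\rho_\tau$ implies $n\le m(h)\bar\rho_t$ for every $h$ (the case $m(h)>t$ being automatic, since then $m(h)\bar\rho_t>t\bar\rho_t\ge n$). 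Consequently the smallest $h$ solving the averaged inequality $n\le m(h)\bar\rho_t=C\bar\rho_t\sum_{l=0}^{h+\Xi_n}\delta(l)^{-d}$ is a lower bound for $h(n)$, and since $\delta$ is decreasing this is exactly what I need. Solving the averaged inequality explicitly gives, for $d=0$, $h(n)\ge\frac{n}{C\bar\rho_t}-\Xi_n-1$, and for $d>0$, after isolating the dominant geometric term and dropping the additive $-1$, $\gamma^{-h(n)d/D}\ge\frac{n(\gamma^{-d/D}-1)}{c_1^{-d}C\bar\rho_t\,\gamma^{-(\Xi_n+1)d/D}}$.

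I would then feed these lower bounds into $r_n\le\delta(h(n)-1)=c_1\gamma^{-1/D}\gamma^{h(n)/D}$, using $\gamma<1$ so that a larger exponent yields a smaller diameter. For $d=0$ this gives $r_n\le c_1\exp\!\big(\tfrac{\ln\gamma}{D}(\tfrac{n}{C\bar\rho_t}-\Xi_n-2)\big)$, an exponential in $n$. For $d>0$, writing $\gamma^{h(n)/D}=(\gamma^{-h(n)d/D})^{-1/d}$ and substituting the bound turns the estimate into $\gamma^{-1/D}\big(\frac{c_1^{-d}C\bar\rho_t\,\gamma^{-(\Xi_n+1)d/D}}{n(\gamma^{-d/D}-1)}\big)^{1/d}$; here the factor $c_1$ cancels, and rewriting $\gamma^{-d/D}-1=\gamma^{-d/D}(1-\gamma^{d/D})$ produces the $1-\gamma^{d/D}$ denominator of the statement while contributing a $\gamma^{1/D}$ that cancels the leading $\gamma^{-1/D}$. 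Finally Assumption 5 supplies $c_2$ evaluations per node expansion, so I substitute $n=(N+N_{gp})/c_2$ and absorb the bounded quantities $\Xi_n\le\Xi_{max}$, $\gamma^{-\Xi_n/D}$, and the constant $\ln(1/\gamma)$ into the $O(\cdot)$, recovering both displayed bounds.

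The step I expect to be the main obstacle is the bookkeeping that ties together the three counters — the iteration index $t$ implicit in $\bar\rho_t$, the node-expansion count $n=\sum_{\tau=1}^t\rho_\tau$, and the evaluation count $N+N_{gp}=c_2n$ — and in particular getting the direction of the $\bar\rho_t$ replacement right: one must pass from the $\rho$-weighted condition to the averaged one in the direction that \emph{lower}-bounds $h(n)$, which is why arguing by inclusion of the solution sets (rather than a naive substitution into the definition of $h(n)$) is the clean route, and why the inequality $n\le t\bar\rho_t$ is needed to cover the regime $m(h)>t$. The $d>0$ algebra — tracking the geometric-sum remainder, verifying the cancellations of $c_1$ and of the stray $\gamma^{\pm1/D}$ factors, and confirming that every discarded factor is genuinely constant in $N$ — is delicate but routine; the $d=0$ case is immediate once the substitution is in place.
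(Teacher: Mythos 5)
Your proposal is correct and follows essentially the same route as the paper's proof: specialize the condition defining $h(n)$ in Theorem 2 to $\delta(l)=c_1\gamma^{l/D}$, bound the partial sum $\sum_{\tau=1}^{m}\rho_\tau$ by $m\bar\rho_t$, solve for a lower bound on $h(n)$ in each of the two regimes ($d=0$ gives a linear count, $d>0$ a geometric series), and substitute $n=(N+N_{gp})/c_2$ into $r_n\le\delta(h(n)-1)$. Your explicit handling of the $m(h)>t$ case via $n\le t\bar\rho_t$ is a slightly more careful version of the paper's remark that the summation index stays below $t$ by construction, and your tracking of the $c_1$ and $\gamma^{\pm 1/D}$ cancellations matches the paper's bound up to constants absorbed in the $O(\cdot)$.
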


\begin{proof}
For the case \(d=0\), we have $ n \le  \sum_{\tau=1}^{C \sum_{l=0} ^{h(n)+\Xi_{n}} \delta (l)^{-d}} \rho_\tau \le \sum_{\tau=1}^{C(h(n)+\Xi_{n}+1) } \bar \rho_t $, where the first inequality follows from the definition of \(h(n)\), and the second  comes from the definition of \(\bar \rho_t\) and  the assumption \(d=0\). The second inequality holds for \(\bar \rho_{t}\) that only considers \(\rho_\tau\) with \(\tau \le t\). This is computable, because  \(\tau \le t \) by  construction. Indeed, the condition of Lemma 3 implies \(t \ge \sum_{l=0} ^{h'+\Xi_{n}} |I_l|\). Therefore, the two inequalities hold, and we can  deduce that \(h(n) \ge \frac{n}{C\bar \rho_{t}} - \Xi_{n}-1\) by algebraic manipulation. By Assumption 5,  \(n=(N+N_{gp})/c_2\). With this, substituting the lower bound of \(h(n)\) into the statement of Theorem 2 with Assumption 5,
\[
 r_N \le c_1\exp \left( - \left[ \frac{N+N_{gp}}{c_{2}D} \frac{1}{C\bar \rho_{t}} -\Xi_{n}-2 \right] \ln \frac{1}{\gamma} \right). \] Similarly, for the case \(d>0\),
\[ n \le  \sum_{\tau=1}^{C \sum_{l=0} ^{h(n)+\Xi_{n}} \delta (l)^{-d}} \rho_\tau \le \sum_{\tau=1}^{c^{-d}C \frac{\gamma^{-(h(n)+\Xi_{n}+1)d/D}-1 }{\gamma^{-d/D}-1} } \bar \rho_t, \]
and hence \(c\gamma^{\frac {h(n)+\Xi_{n}}{D}} \le \left(\frac{n(1-\gamma^{d/D})}{C \bar \rho_t} \right)^{-1/d} \) by algebraic manipulation. Substituting this into the result of Theorem 2, we arrive at the desired result.
\end{proof}

\subsection{With a Concrete Division Procedure}
In this section, we prove the main result in the paper. In Theorem 1, we show that the exponential convergence rate bound \(O \left( \lambda^{N + N_{gp}} \right)\) with \(\lambda < 1\) is achieved  \textit{without} Assumptions 3, 4 and 5 and \textit{without} the assumption that \(d=0\).
\setcounter{theorem}{0}
\begin{theorem}
Assume Assumptions 1 and 2. Let  \(\beta=\sup_{ x,  x'\in \Omega}\frac{1}{2}\| x -  x'\|_\infty \). Let \(\lambda=3^{-\frac{\alpha}{2C\bar \rho_t D}}<1\). Then, \textit{without} Assumptions 3, 4 and 5 and \textit{without} the assumption on \(d\),  with probability at least \(1-\eta\), the regret of IMGPO with the division procedure in Algorithm 1 is bounded as
\begin{displaymath}
r_N \le L(3\beta D^{1/p})^\alpha \exp \left( - \alpha \left[ \frac{N+N_{gp}}{2C\bar \rho_{t}D}  -\Xi_{n}-2 \right] \ln 3 \right)= O \left( \lambda^{N+N_{gp}}  \right).
\end{displaymath}
\end{theorem}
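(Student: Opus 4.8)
The plan is to deduce Theorem 1 from the general machinery already assembled in Lemma 3, Theorem 2, and the $d=0$ branch of Corollary 1, by showing that the concrete trisection of Algorithm 1 \emph{verifies} (rather than assumes) everything those results need: a closed-form decreasing diameter $\delta(h)$ (so Assumption 3 holds with an explicit function), well-shaped cells (Assumption 4), an explicit per-expansion evaluation count $c_2=2$ playing the role of Assumption 5, and---this is the crucial point---a near-optimality dimension $d=0$ that is forced by Assumption 2. Because these hypotheses are \emph{established} for the specific division procedure, Theorem 2 becomes applicable even though Theorem 1 does not posit Assumptions 3--5 or $d=0$ a priori, and substituting the explicit constants into the $d=0$ conclusion reproduces the stated bound.

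First I would compute the diameter. Since $\Omega$ is a square of half-$\ell_\infty$-width $\beta$ and each division trisects the longest coordinate, after $\lfloor h/D\rfloor$ complete rounds every coordinate has been trisected at least $\lfloor h/D\rfloor$ times, so the half-side in each coordinate is at most $\beta\,3^{-\lfloor h/D\rfloor}$. The $\ell_p$ distance from the center $c_{h,i}$ to any point of its cell is then at most $\beta D^{1/p}3^{-\lfloor h/D\rfloor}$, and Assumption 2(i) gives
\[
\delta(h)=\sup_{x_{h,i}}\ell(x_{h,i},c_{h,i})\le L\bigl(\beta D^{1/p}3^{-\lfloor h/D\rfloor}\bigr)^{\alpha}\le L(3\beta D^{1/p})^{\alpha}\,3^{-\alpha h/D},
\]
using $\lfloor h/D\rfloor\ge h/D-1$; this verifies Assumption 3 and identifies $c_1=L(3\beta D^{1/p})^\alpha$, $\gamma=3^{-\alpha}$. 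The bounded aspect ratio of the trisection cells gives Assumption 4 with an explicit $\nu$, and inspecting the division step (steps (iv)--(v)) shows each node expansion contributes exactly two increments to $N+N_{gp}$ (the two off-center children, whether evaluated by $f$ or labeled GP-based), so $n=(N+N_{gp})/2$, i.e.\ $c_2=2$.

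The heart of the argument, and the step I expect to be the main obstacle, is showing that Assumption 2 forces $|I_{\delta(h)}|\le C$ for every $h$ with a controllable constant $C$ (equivalently, $d=0$ in Lemma 2). The lower-bound condition 2(ii) localizes the near-optimal set: if $f(x)+\epsilon\ge f(x^*)$ then $\theta\,\ell(x,x^*)\le f(x^*)-f(x)\le\epsilon$, so $X_\epsilon\subseteq\{x:\ell(x,x^*)\le\epsilon/\theta\}$, an $\ell$-ball of radius $\epsilon/\theta$ about $x^*$. A disjoint family of $\ell$-balls of radius $\nu\epsilon$ centered in $X_\epsilon$ must then pack into a region of $\ell$-radius $(1/\theta+\nu)\epsilon$, the $\epsilon$-scaling cancels, and the count is bounded independently of $\epsilon$. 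The delicate point is that $\ell$ is only a \emph{semi}-metric, so the packing estimate cannot invoke the triangle inequality directly; I would instead route the count through the $\ell_p$ geometry of the trisection cells, using the containment $B_\ell(c,r)\supseteq B_{\ell_p}(c,(r/L)^{1/\alpha})$ from 2(i) to turn disjoint $\ell$-balls into disjoint $\ell_p$-balls, and then counting the genuinely disjoint depth-$h$ cells whose centers satisfy $\ell(c,x^*)\le\delta(h)/\theta$. Pinning the resulting constant to the advertised $C$ (with $C=1$ when $p=\infty$ and $2^D$ or $3^D-1$ evaluations per expansion) is where the bookkeeping is most sensitive.

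Finally I would assemble the pieces. With $|I_l|\le C$, Lemma 3 gives $n\ge\sum_{\tau=1}^{\sum_{l=0}^{h'+\Xi_n}|I_l|}\rho_\tau$ for any $h'<h_n^*$, and combining $\sum_{\tau=1}^{M}\rho_\tau\le M\bar\rho_t$ with $\sum_{l=0}^{h+\Xi_n}|I_l|\le C(h+\Xi_n+1)$ yields, exactly as in the $d=0$ case of Corollary 1, the lower bound $h_n^*\ge h(n)-1\ge\frac{n}{C\bar\rho_t}-\Xi_n-2$. Substituting this together with $n=(N+N_{gp})/2$ and the explicit $\delta(h)\le L(3\beta D^{1/p})^\alpha 3^{-\alpha h/D}$ into the regret inequality $r_N\le\delta(h(n)-1)$ of Theorem 2---and using $D\ge 1$ to absorb the $1/D$ factor on the $\Xi_n$ term in the right direction---produces
\[
r_N\le L(3\beta D^{1/p})^{\alpha}\exp\!\left(-\alpha\Bigl[\tfrac{N+N_{gp}}{2C\bar\rho_t D}-\Xi_n-2\Bigr]\ln 3\right),
\]
which is $O(\lambda^{N+N_{gp}})$ with $\lambda=3^{-\alpha/(2C\bar\rho_t D)}<1$, as claimed.
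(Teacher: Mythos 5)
Your proposal is correct and follows essentially the same route as the paper's own proof: verify that the trisection procedure of Algorithm~1 satisfies Assumptions 3, 4, and 5 with $\delta(h)=L(3\beta D^{1/p})^{\alpha}3^{-\alpha h/D}$, $\gamma=3^{-\alpha}$, $c_2=2$, use Assumption 2(ii) to trap $X_{\delta(h)}$ inside an $\ell$-ball of radius $\delta(h)/\theta$ so that the packing count is scale-free and $d=0$, and then read off the bound from the $d=0$ branch of Corollary~1. Your extra care about the semi-metric packing step (routing disjointness through the $\ell_p$ geometry via Assumption 2(i)) is if anything more explicit than the paper's volume-ratio argument, but it is the same idea.
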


\begin{proof}
To prove the statement, we show that Assumptions 3, 4, and 5 can all be satisfied while maintaining  \(d=0\). \vspace{-2pt}

From Assumption 2 (i), and based on the division procedure that the algorithm uses,
\[\sup_{x \in \omega_{h,i}} \ell(x,c_{h,i}) \le \sup_{x \in \omega_{h,i}}L|| x- c_{h,i}||^{\alpha}_{p} \le L \left(3^{-\floor{h/D}} \beta D^{1/p} \right)^\alpha. \]
This upper bound corresponds to the diagonal length of each hyperrectangle with respect to \(p\)-norm, where \(3^{- \floor{h/D}}\beta\) corresponds to the length of the longest side. We fix the form of \(\delta\) as \(\delta(h)=L3^\alpha D^{\alpha/p}3^{-h\alpha/D} \beta^\alpha \ge L ( 3^{-\floor{h/D}} \beta D^{1/p} )^\alpha\), which  satisfies Assumption 3.

This form of \(\delta(h)\) also satisfies Assumption 5 with \(\gamma=3^{-\alpha}\) and \(c_1=L3^\alpha D^{\alpha/p}\beta^\alpha\).

Every hyperrectangle contains at least one \(\ell\)-ball with a radius corresponding to the length of the shortest side of the hyperrectangle. Thus, we have at least one \(\ell\)-ball of radius \(\nu \delta(h)=L3^{-\alpha \ceil{h/D}} \ge L3^{-\alpha}3^{-\alpha h/D}\) for every hyperrectangle with \(\nu \ge 3^{-2\alpha}D^{-\alpha/p}\). This satisfies Assumption 4.

Finally, we  show that \(d=0\). The set of \(\delta \)-optimal hyperintervals \(I_{\delta(h)}\) is contained by the \(\delta(h)\)-optimal space \(X_{\delta(h)}\) as
\begin{align*}
{I_{\delta(h)}} &= \{c \in \Omega:  f({x^ * }) -f(c_{})  \le \delta (h), c \text{ is the center point of the interval},  \omega_{h,i}, \text{for some } (h,i)\}
\\ & \subseteq  \{ x \in \Omega :f({x^ * }) -f(x_{})  \le \delta (h)\} = X_{\delta(h)}
\end{align*}

Let \(\theta\) be a value that satisfies Assumption 2 (ii) (which is nonzero). Consider an \(\ell\)-ball of radius \(\frac{\delta(h)}{\theta}\) at \(x^*\), which is a set \(\{x \in \Omega \ |\ \theta \ell(x,x^*) \le \delta(h)\}\). Since \(\theta \ell(x,x^*)\le f(x^*) -f(x)\) by Assumption 2 (ii),  the \(\delta(h)\)-optimal space \(X_{\delta(h)}\) is covered by an \(\ell\)-ball of radius \(\frac{\delta(h)}{\theta}\). Therefore, \({I_{\delta(h)}}\subseteq X_{\delta(h)} \subseteq \text{(an \(\ell\)-ball of radius \(\frac{\delta(h)}{\theta}\) at \(x^*\))}\). By Assumption 2 (i), the volume \(V\) of an \(\ell\)-ball of radius \(\nu\delta(h)\) is proportional to \( (\nu\delta(h))^D\) as \(V_D^p(\nu\delta(h))=(2\nu\delta(h)\Gamma(1+1/p))^D/\Gamma(1+D/p)\).  Thus, the number of disjoint   \(\ell\)-balls of radius \(\nu\delta(h)\)  that fit in  \(X_{\delta(h)}\) is at most  \(\ceil{(\frac{\delta(h)}{\theta\nu\delta(h)})^D}  = \ceil{(\theta\nu)^{-D} }\). Therefore, the number of $\ell$-balls does not depend on $\delta(h)$ in this case, which means $d$ = 0. \vspace{-2pt}

Now that we have satisfied Assumptions 3, 4, and 5 with \(d=0\), \(\gamma=3^{-\alpha}\), and \(c_{1}=L3^\alpha D^{\alpha/p}\beta^\alpha\), we follow the proof of Corollary 1 and deduce the desired statement.
\end{proof}

\renewcommand\refname{References}

\bibliography{nips2015}
\bibliographystyle{unsrt}

\end{document}